\def\eqref#1{equation~\ref{#1}}
\def\1{\bm{1}}
\DeclareMathAlphabet{\mathsfit}{\encodingdefault}{\sfdefault}{m}{sl}
\SetMathAlphabet{\mathsfit}{bold}{\encodingdefault}{\sfdefault}{bx}{n}
\DeclareMathOperator*{\argmax}{arg\,max}
\DeclareMathOperator*{\argmin}{arg\,min}
\begin{document}

\title{Differentially Private Iterative Screening Rules for Linear Regression}

\author{Amol Khanna}
\email{Khanna\_Amol@bah.com}
\affiliation{%
  \institution{Booz Allen Hamilton}
  \city{Boston}
  \state{Massachusetts}
  \country{USA}
}

\author{Fred Lu}
\email{Lu\_Fred@bah.com}
\affiliation{%
  \institution{Booz Allen Hamilton}
  \city{Los Angeles}
  \state{California}
  \country{USA}}

\author{Edward Raff}
\email{Raff_Edward@bah.com}
\affiliation{%
 \institution{Booz Allen Hamilton}
 \institution{University of Maryland, \\Baltimore County}
 \city{Syracuse}
 \state{New York}
 \country{USA}}

\renewcommand{\shortauthors}{Amol Khanna, Fred Lu, and Edward Raff}

\begin{abstract}
  Linear $L_1$-regularized models have remained one of the simplest and most effective tools in data science. Over the past decade, screening rules have risen in popularity as a way to eliminate features when producing the sparse regression weights of $L_1$ models. However, despite the increasing need of privacy-preserving models for data analysis, to the best of our knowledge, no differentially private screening rule exists. In this paper, we develop the first private screening rule for linear regression. We initially find that this screening rule is too strong: it screens too many coefficients as a result of the private screening step. However, a weakened implementation of private screening reduces overscreening and improves performance. 
\end{abstract}

\begin{CCSXML}
<ccs2012>
<concept>
<concept_id>10002978.10002986</concept_id>
<concept_desc>Security and privacy~Formal methods and theory of security</concept_desc>
<concept_significance>500</concept_significance>
</concept>
<concept>
<concept_id>10002978.10003018</concept_id>
<concept_desc>Security and privacy~Database and storage security</concept_desc>
<concept_significance>500</concept_significance>
</concept>
</ccs2012>
\end{CCSXML}

\ccsdesc[500]{Security and privacy~Formal methods and theory of security}
\ccsdesc[500]{Security and privacy~Database and storage security}

\keywords{Differential Privacy, Feature Selection, Model Selection, Screening Rules, Linear Regression}
\maketitle

\section{Introduction}

Sparse linear regression is an important statistical technique which can prevent overfitting on high-dimensional datasets. It is typically achieved through LASSO ($L_1$) optimization, which can be represented by $$\widehat{\mathbf{w}} = \argmin_{\mathbf{w} \in \mathbb{R}^{d}: \  \lVert \mathbf{w} \rVert_1 \ leq \lambda} \frac{1}{n} \sum_{i = 1}^{n} (y_i - \mathbf{w} \cdot \mathbf{x}_i) ^ 2,$$ 
where $\lambda$ is the constraint parameter, $\mathbf{x}_1, \ldots, \mathbf{x}_n \in \mathbb{R}^{d}$, and $y_1, \ldots, y_n \in \mathbb{R}$. The Frank-Wolfe algorithm can be used to directly optimize over this objective \cite{frank1956algorithm}.

However, the output $\widehat{\mathbf{w}}$ of this optimization can reveal information about specific training datapoints. This may be dangerous in fields like medicine, finance, and government, where datapoints consist of sensitive information which should not be revealed when a model is used \cite{khanna2022privacy,basu2021privacy,khanna2024position}. To prevent private information differential privacy can be used. 

Differential privacy is a statistical technique which guarantees that an algorithm's outputs do not reveal whether or not a specific datapoint was used in its training. Specifically, given privacy parameters $\epsilon$ and $\delta$ and any two datasets $\mathcal{D}$ and $\mathcal{D}'$ differing on one datapoint, an approximate differentially private algorithm $\mathcal{A}$ satisfies $\mathbb{P} \left[ \mathcal{A}(\mathcal{D}) \in O \right] \leq \exp\{ \epsilon\} \mathbb{P} \left[ \mathcal{A}(\mathcal{D}') \in O \right] + \delta$ for any $O \subseteq \text{image}(\mathcal{A})$ \cite{dwork2014algorithmic}.\footnote{An important consideration when employing differential privacy is the definition of two datasets \textit{differing on one datapoint}. There are two ways to define this: one in which a datapoint from $\mathcal{D}$ is replaced with another to produce $\mathcal{D}'$, and another in which a datapoint is added or removed from $\mathcal{D}$ to produce $\mathcal{D}'$. In this paper, we use the latter definition for dataset adjacency.}

Making an algorithm differentially private requires adding noise to its intermediate steps or outputs. This noise must scale with the algorithm's sensitivity, or how much its outputs can change when one of its input datapoints is added or removed. If this change is measured with an $L_1$ or $L_2$ norm, noise can be added with the Laplacian or Gaussian distributions, respectively. Further introductory details on differential privacy can be found in \cite{near_abuah_2021}. 

Current differentially private high-dimensional regression algorithms modify nonprivate sparse optimization techniques. However, the modifications required to achieve privacy make these algorithms run slowly and produce dense or ineffective solutions \cite{talwar2015nearly,wang2019differentially,khanna2023differentially}. To address these challenges, some works have developed feature selection algorithms to run prior to training \cite{kifer2012private,thakurta2013differentially,swope2024feature}. However, these algorithms are computationally inefficient and can assume a final sparsity level before choosing a support set.

Another way to achieve sparse weights for high-dimensional algorithm is by using screening rules. Screening rules discard features that do not contribute to a model during training. They are most often used in conjunction with $L_1$-regularized or $L_1$-constrained linear regression. Screening rules have been used to improve the performance of sparse regression optimization on numerous datasets over the past decade and are even included in the popular \texttt{R} package \texttt{glmnet} \cite{wang2013lasso,wang2014safe,raj2016screening,ghaoui2010safe,olbrich2015screening,tibshirani2012strong,friedman2021package}. Unlike the aforementioned approaches adapted for private regression, screening rules do not require a predetermined support set or level of sparsity. They efficiently check a mathematical condition to determine if a feature should be screened to 0, and are implemented with sparse optimizers to improve the rate of learning and stability. 

A differentially private screening rule has the potential to achieve sparsity and help private optimizers select a model's most important features. However, to the best of our knowledge, no differentially private screening rule exists. In this work, we privatize a screening rule used for convex problems developed by Raj et al., with the goal of testing whether such a modification can yield the benefits of screening rules while retaining accuracy and guaranteeing privacy \cite{raj2016screening}. We find that an aggressive implementation of the screening rule cannot accurately screen features, and we analyze why this is the case. We then create a weaker implementation strategy with improved results. 

This paper is organized as follows. In Section 2, we review prior literature on sparse private regression and a relevant nonprivate screening rule. In Section 3, we find the sensitivity of this screening rule, and Section 4 presents a working screening rule. Section 5 demonstrates experiments with this screening rule, showing it can produce better results. Our paper is the first to develop and test a differentially private screening rule. We find our rule leads to sparse and effective solutions on many datasets. We believe this work could contribute to better private sparse optimization strategies.

\section{Related Work}

Methods to produce private sparse regression weights all suffer in performance due to the addition of noise. Private $L_1$ optimizers must add higher levels of noise when run for more iterations, incentivizing practitioners to run fewer iterations \cite{talwar2015nearly,wang2020differential}. However, running an optimizer for fewer iterations means that the model will be limited in its learning. On the other hand, private model selection algorithms are computationally inefficient and run prior to training, meaning they are unable to reap the benefit of any information contained within partially trained coefficients of the weight vector \cite{lei2018differentially,thakurta2013differentially}. Although noise is necessary for privacy, an effective private screening rule would run with a private optimizer and improve the optimizer's performance by setting the coefficients of irrelevant features to 0. By using the screening rule on the current weight vector, it can adapt to the optimizer's updates and screen features more accurately.

To develop a differentially private screening rule, we adapt Theorem 11 of Raj et al.'s rule which is flexible to solving many types of regression problems \cite{raj2016screening}. While other screening rules exist, they are geometry- and problem-specific \cite{ghaoui2010safe,wang2014safe,wang2013lasso}. \textbf{Our goal is to utilize Raj etl al.'s screening rule for $L_1$-constrained regression to induce sparsity on Talwar et al.'s $L_1$-constrained private Frank-Wolfe algorithm \cite{raj2016screening,talwar2015nearly}}.\footnote{Note that \cite{raj2016screening} is written with semantics typical in the field of optimization - there are $d$ (d)atapoints and the goal is to optimize a vector with (n)umber of components $n$. In this paper, we use the standard statistical and machine learning conventions, in which there are $n$ (n)umber of datapoints and optimization is done over $d$ (d)imensions.}

Since we use the private Frank-Wolfe algorithm (\texttt{DP-FW}), we also review it here. \texttt{DP-FW} uses the Frank-Wolfe method for $L_1$-constrained convex optimization, which chooses a vertex of the feasible region (scaled $L_1$ ball) which minimizes a linear approximation of the loss function. By doing this for $T$ iterations with appropriate step sizes, the algorithm satisfies $\mathcal{L}(\mathbf{w}^{(T)}) - \min_{\mathbf{w}^*  \in \mathcal{C}} \mathcal{L}(\mathbf{w}^*) \leq \mathcal{O}(\frac{1}{T})$ \cite{frank1956algorithm,jaggi2013revisiting}. The progress of the Frank-Wolfe optimizer can be measured with the Wolfe gap function: $\mathcal{G}_{\mathcal{C}}(\mathbf{w}) = \max_{\mathbf{z} \in \mathcal{C}} (\mathbf{w} - \mathbf{z})^{\top}\nabla f(\mathbf{w})$. It can be shown that $\mathcal{G}_{\mathcal{C}}(\mathbf{w}) \geq f(\mathbf{w}) - f(\mathbf{w}^*)$ for all $\mathbf{w}$. For this reason, the smaller the Wolfe gap function, the closer the optimization is to an optimal solution. To privatize the Frank-Wolfe algorithm, Talwar et al. restrict the $L_\infty$ norm of datapoints so they can calculate the exact sensitivity of the gradients \cite{talwar2015nearly}. They then use the report-noisy-max mechanism to noisily choose which component of the weight vector to update. Unfortunately, due to inexact optimization caused by the noisy selection process, the private Frank-Wolfe algorithm produces dense results, limiting its ability to be used in high-throughput or interpretability-restricted applications of regression. Despite this limitation, the Frank-Wolfe algorithm is ideal for an application of Raj et al.'s screening rule. Of current methods for private high-dimensional regression, recently summarized by \cite{khanna2024sok}, the Frank-Wolfe algorithm is unique in that it uses $L_1$-constrained optimization with updates to one component of the weight vector per iteration. Each of these conditions is important. The first is necessary because Raj et al.'s screening rule requires optimization over an $L_1$-constrained set. The second is important because if a private (noisy) optimization algorithm updates all components of a weight vector at each iteration, then any sparsity induced by applying a screening rule would be lost at the next iteration of optimization, since the output of the optimization would be dense.

To the best of our knowledge, this is the first work considering a differentially private screening rule. The difficulty of DP screening is counteracted by the reward of obtaining sparse \textit{and} private regression, as normal DP destroys sparsity via the addition of noise. 

\section{Privatizing Iterative Screening}

Raj et al. consider the problem $\min_{\mathbf{w} \in \mathcal{C}} f(\mathbf{Xw})$, where $\mathcal{C}$ is the feasible set of solutions and $f$ is $L$-smooth and $\mu$-strongly convex.\footnote{A function $f$ is $L$-smooth if $\lVert \nabla f(\mathbf{x}_1) - \nabla f(\mathbf{x}_2) \rVert \leq L \lVert \mathbf{x}_1 - \mathbf{x}_2 \rVert$ for any $\mathbf{x}_1$ and $\mathbf{x}_2$ in $f$'s domain. $f$ is $\mu$-strongly convex if $\nabla^2 f(\mathbf{x}) \succeq \alpha \mathbf{I}$ for any $\mathbf{x}$ in $f$'s domain, where $\mathbf{I}$ is the identity matrix.} They also define $\mathbf{x}_{(i)} \in \mathbb{R}^n$ to be the $i^{\text{th}}$ column of the design matrix $\mathbf{X} = \begin{bmatrix} \mathbf{x}_1^\top \\ \vdots \\ \mathbf{x}_n\top \end{bmatrix} \in \mathbb{R}^{n \times d}$ and $\mathcal{G}_{\mathcal{C}}(\mathbf{w})$ to be the Wolfe gap function for linear regression, namely $\max_{\mathbf{z} \in \mathcal{C}} (\mathbf{Xw} - \mathbf{Xz})^{\top}\nabla f(\mathbf{Xw})$. Given this information, they prove that if
\begin{align}
\label{eq:1}
    s_i = &\lvert \mathbf{x}_{(i)}^{\top} \nabla f(\mathbf{Xw}) \rvert \nonumber  + (\mathbf{Xw})^{\top}\nabla f(\mathbf{Xw}) \\ + \ &L(\lVert \mathbf{x}_{(i)} \rVert_2 + \lVert \mathbf{Xw} \rVert_2) \sqrt{\mathcal{G}_{\mathcal{C}}(\mathbf{w}) / \mu} < 0
\end{align}
at any $\mathbf{w} \in \mathcal{C}$, then $w_{i}^{*} = 0$, where $\mathbf{w}^{*}$ is the optimal solution to the optimization problem.\footnote{Note that in this expression and in the Wolfe gap function, the gradients are taken with respect to the vector $\mathbf{u} = \mathbf{Xw} \in \mathbb{R}^n$. Since $\mathbf{u} \in \mathbb{R}^n$, $\nabla f(\mathbf{u}) \in \mathbb{R}^n$. This can be a source of confusion for those who are unfamiliar with the Wolfe gap function.}

Our goal is to employ this screening rule for linear regression, namely, when $f(\mathbf{Xw}) : \mathbb{R}^n \rightarrow \mathbb{R} = \frac{1}{n} \left( \mathbf{y} - \mathbf{Xw} \right)^\top \left( \mathbf{y} - \mathbf{Xw} \right)$. Since we want to guarantee privacy, we will determine the sensitivity of this calculation so we can add an appropriate amount of noise and ensure screening is differentially private. Specifically, we want to determine how much the value of $f$ can change between datasets $\mathbf{X}$ and $\mathbf{X}'$, where $\mathbf{X}'$ contains either one additional row or one removed row compared to $\mathbf{X}$. We will conduct our analysis for the case where $\lVert\mathbf{x}_i \rVert_\infty \leq 1$, $\mathcal{C}$ is the $\lambda$-scaled $L_1$-ball in $\mathbb{R}^{d}$, and $\lvert y_i \rvert < \lambda$. These conditions are also required by the \texttt{DP-FW} optimizer \cite{talwar2015nearly}. 

\begin{theorem}
    Under the conditions listed above, the sensitivity of Equation 1 when $f(\mathbf{Xw}) = \frac{1}{n} \left( \mathbf{y} - \mathbf{Xw} \right)^\top \left( \mathbf{y} - \mathbf{Xw} \right)$ is $\frac{2\lambda}{n} + \frac{2\lambda^2}{n} + \frac{1}{n} \left( 1 + \lambda \right)\sqrt{\frac{4\lambda^2/n}{1/n}}.$ The sensitivity of releasing $\mathbf{s} = \begin{bmatrix} s_1 & \dots & s_d \end{bmatrix}^\top$ is $\frac{2\lambda\sqrt{d}}{n} + \frac{2\lambda^2\sqrt{d}}{n} + \frac{1}{n} \left( \sqrt{d} + \lambda\sqrt{d} \right)\sqrt{\frac{4\lambda^2/n}{1/n}}.$
\end{theorem}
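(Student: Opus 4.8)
The plan is to compute the sensitivity by decomposing $s_i$ into its three additive pieces and bounding, via the triangle inequality, how much each piece can change when a single row is appended to (or deleted from) $\mathbf{X}$. First I would record the ingredients that follow from the stated constraints. For $f(\mathbf{Xw}) = \frac{1}{n}(\mathbf{y}-\mathbf{Xw})^\top(\mathbf{y}-\mathbf{Xw})$ the gradient in $\mathbf{u}=\mathbf{Xw}$ is proportional to the residual $\mathbf{u}-\mathbf{y}$, so $f$ is $L$-smooth and $\mu$-strongly convex with $L,\mu = \Theta(1/n)$. Since $\|\mathbf{x}_i\|_\infty\le 1$ and $\|\mathbf{w}\|_1\le\lambda$, every coordinate of $\mathbf{Xw}$ obeys $|\mathbf{w}^\top\mathbf{x}_j|\le\lambda$; combined with $|y_j|<\lambda$ this gives the residual bound $|(\mathbf{Xw}-\mathbf{y})_j|\le 2\lambda$, and the column/prediction norms satisfy $\|\mathbf{x}_{(i)}\|_2\le\sqrt{n}$ and $\|\mathbf{Xw}\|_2\le\lambda\sqrt{n}$. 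These are the only facts about the data I would use.

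Next I would treat add/remove adjacency; by symmetry it suffices to bound the effect of appending one datapoint $(\mathbf{x}_{n+1},y_{n+1})$. The key observation is that the inner-product term $|\mathbf{x}_{(i)}^\top\nabla f|$ and the term $(\mathbf{Xw})^\top\nabla f$ are normalized sums over datapoints, so appending a row adds exactly one summand to each, while $\|\mathbf{x}_{(i)}\|_2$ and $\|\mathbf{Xw}\|_2$ each gain one coordinate. Bounding the new summands with the residual and prediction bounds above yields a change of at most $\frac{2\lambda}{n}$ for the first term and $\frac{2\lambda^2}{n}$ for the second. For the third term I would invoke subadditivity (triangle inequality) of the $L_2$ norm to bound the increase in $\|\mathbf{x}_{(i)}\|_2 + \|\mathbf{Xw}\|_2$ by the size of the appended coordinates, namely $1+\lambda$, and multiply by the smoothness/Wolfe-gap scale factor $L\sqrt{\mathcal{G}_{\mathcal{C}}(\mathbf{w})/\mu}$. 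Summing the three contributions gives the scalar sensitivity, and the stated form records each piece separately rather than simplifying.

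To finish the third term I need a worst-case bound on $\sqrt{\mathcal{G}_{\mathcal{C}}(\mathbf{w})/\mu}$. I would bound the Wolfe gap $\mathcal{G}_{\mathcal{C}}(\mathbf{w}) = \max_{\mathbf{z}\in\mathcal{C}}(\mathbf{Xw}-\mathbf{Xz})^\top\nabla f(\mathbf{Xw})$ using the diameter of the scaled $L_1$-ball together with the gradient bound (Cauchy--Schwarz on $\mathbf{X}(\mathbf{w}-\mathbf{z})$ and $\nabla f$), and substitute the strong-convexity constant $\mu$, so that $\sqrt{\mathcal{G}_{\mathcal{C}}(\mathbf{w})/\mu}$ collapses to a constant multiple of $\lambda$; this is precisely the factor written as $\sqrt{(4\lambda^2/n)/(1/n)}=2\lambda$ in the statement. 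For the vector release, each coordinate of the change vector $\Delta\mathbf{s}\in\mathbb{R}^d$ is controlled by the same per-coordinate bound just derived (note that $(\mathbf{Xw})^\top\nabla f$ is identical across coordinates and so contributes uniformly), so $\|\Delta\mathbf{s}\|_2$ is at most $\sqrt{d}$ times the scalar sensitivity, which is exactly the scalar bound with a $\sqrt{d}$ attached to every term.

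I expect the main obstacle to be the third term: unlike the first two it is not a simple sum, and $L$, $\mu$, $\mathcal{G}_{\mathcal{C}}(\mathbf{w})$ and both $L_2$ norms all shift when the extra datapoint is inserted, so the delicate step is to bound the change of the \emph{product} rather than its value --- isolating the $1+\lambda$ increase of the norms while controlling the accompanying smoothness and gap factors. A secondary subtlety is the $1/n$ normalization under add/remove adjacency, which I would handle by fixing the normalization at the larger dataset size and charging only the single additional summand.
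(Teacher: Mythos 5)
Your treatment of the first two terms and of the $\sqrt{d}$ lifting for the vector release matches the paper's proof: expand the inner products, observe that under add/remove adjacency only the summand belonging to the differing row $(\mathbf{x}_0, y_0)$ survives, and bound it by $\frac{1}{n}\lvert x_{0i}\rvert\,(\lvert\mathbf{x}_0^\top\mathbf{w}\rvert+\lvert y_0\rvert) \le \frac{2\lambda}{n}$ for the first term and $\frac{1}{n}\lvert\mathbf{x}_0^\top\mathbf{w}\rvert\,(\lvert\mathbf{x}_0^\top\mathbf{w}\rvert+\lvert y_0\rvert)\le \frac{2\lambda^2}{n}$ for the second.

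The gap is in the third term. You derive the factor $\sqrt{(4\lambda^2/n)/(1/n)} = 2\lambda$ by claiming that the \emph{value} of the Wolfe gap is at most $4\lambda^2/n$. That bound is off by a factor of $n$: with $\nabla f(\mathbf{u}) = \frac{1}{n}(\mathbf{u}-\mathbf{y})$, Cauchy--Schwarz gives $\mathcal{G}_{\mathcal{C}}(\mathbf{w}) \le \lVert \mathbf{X}(\mathbf{w}-\mathbf{z})\rVert_2 \, \lVert\nabla f(\mathbf{Xw})\rVert_2 \le \left(2\lambda\sqrt{n}\right)\left(2\lambda/\sqrt{n}\right) = 4\lambda^2$; the $1/n$ in the gradient is exactly cancelled by the $\sqrt{n}$ growth of the two norms, so no $1/n$ survives in a value bound. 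Consequently $\sqrt{\mathcal{G}_{\mathcal{C}}(\mathbf{w})/\mu} \le 2\lambda\sqrt{n}$, and your argument yields a third-term contribution of $\frac{1}{n}(1+\lambda)\cdot 2\lambda\sqrt{n} = \frac{2\lambda(1+\lambda)}{\sqrt{n}}$, which is $\sqrt{n}$ larger than the stated $\frac{1}{n}(1+\lambda)\sqrt{(4\lambda^2/n)/(1/n)} = \frac{2\lambda(1+\lambda)}{n}$. In the paper, $4\lambda^2/n$ is not a value bound but the \emph{sensitivity} of the Wolfe gap: when one row is added or removed, only the single summand $\frac{1}{n}\,[\mathbf{x}_0^\top(\mathbf{w}-\mathbf{z})]\,[\mathbf{x}_0^\top\mathbf{w} - y_0]$ changes, and that is where the $1/n$ comes from. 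The paper then bounds the sensitivity of the whole product by the product of its factors' sensitivities, i.e. $L\cdot(1+\lambda)\cdot\sqrt{\Delta_{\mathcal{G}}/\mu}$ with $\Delta_{\mathcal{G}} = 4\lambda^2/n$ (this step is itself loose --- the change of a product also involves the magnitudes of the factors, which is precisely the ``main obstacle'' you flag and then defer --- but it is how the stated constant arises). So as written your route cannot reproduce the theorem's formula; to do so you must read the quantity under the square root as the per-datapoint change of the gap, as the paper does, rather than as its worst-case size.
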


The proof is provided in Appendix A. 

Using the bounds for the $L_2$-sensitivity of the screening rule in Equation 1, our first attempt at implementing a differentially private screening rule is titled \texttt{ADP-Screen}. We discuss this method and its experiments in Appendix B and Appendix C because it is impractical for actual use. \texttt{ADP-Screen} demonstrates ``overscreening'', meaning that it rapidly screens all coefficients in the weight vector to 0. We develop a method to address overscreening in Section 4. 

\section{\texttt{RNM-Screen}}

In this this section, we seek to improve the performance of \texttt{ADP-Screen} by modifying it in two ways: 

\begin{enumerate}
    \item We redistribute the total privacy budget to reduce the noisiness of private optimization. 
    \item We employ the report-noisy-max mechanism to reduce the screening noise to sub-$\mathcal{O}(\sqrt{d})$ and reduce overscreening. 
\end{enumerate}

\subsection{Redistributing the Privacy Budget}

Experiments with \texttt{ADP-Screen} indicated that noisy private optimization caused it to lose its ability to screen coefficients effectively. For this reason, we sought to reduce the amount of noise added during private optimization. 

\texttt{ADP-Screen} relies on \texttt{DP-FW} for $L_1$-constrained optimization. To the best of our knowledge, this is the only algorithm which performs private $L_1$-constrained optimization, and as such, the only way to reduce the amount of noise added during private optimization is to redistribute the final privacy budget to reduce the noise for private optimization. 

\subsection{Report-Noisy-Min Mechanism}

Experiments on \texttt{ADP-Screen} demonstrated that it overscreened 
coefficients, producing solution vectors which were too sparse to output useful predictions. To prevent overscreening, we chose to screen only one coefficient per iteration.

To achieve this, we use the report-noisy-max mechanism. The report-noisy-max mechanism is a technique for privately choosing an item with the highest score from a set given a function which computes the score. Specifically, when trying to calculating the $\argmax$ of a set of numbers $\{u_1, u_2, \dots, u_n \}$ calculated from a dataset $\mathbf{X}$, the report-noisy-max mechanism calculates $\argmax \{u_1 + \text{Lap}\left( \frac{s}{\epsilon} \right), u_2 + \text{Lap}\left( \frac{s}{\epsilon} \right), \dots, u_n + \text{Lap}\left( \frac{s}{\epsilon} \right)\}$, where $s$ is the sensitivity of a single number $u_i$ with respect to a single change in $\mathbf{X}$. This mechanism is $(\epsilon, 0)$-differentially private.

By finding the maximum of the negative values of the scores, the report-noisy-max mechanism can also be used to privately find the element with minimum score. We use the report-noisy-min technique to select the coefficient to screen every iteration. 

To do this, after every iteration, we run Equation 1 on all coefficients. We use the report-noisy-min technique to select the coefficient which produced the smallest value in Equation 1 in a differentially private manner. We then screen this coefficient. Our algorithm can be seen in Algorithm 2. We call the algorithm \texttt{RNM-Screen} due to its use of report-noisy-min in lines 11 and 12. 

\setcounter{algorithm}{1}
\begin{algorithm}[t]
\label{alg:2}
\caption{\texttt{RNM-Screen}}
\begin{algorithmic}[1]
\REQUIRE Privacy Parameters: $\epsilon_1 > 0$, $\epsilon_2 > 0$, $0 < \delta_1 \leq 1$, $0 < \delta_2 \leq 1$; Constraint: $\lambda > 0$; Iterations: $T$; Design Matrix: $\mathbf{X} \in \mathbb{R}^{n \times d}$ where $\lVert \mathbf{x}_i \rVert_\infty \leq 1$ for all $i \in \{1, \ldots, n \}$; Target: $\mathbf{y}$
\STATE \textbf{Define:} $\epsilon_\text{iter}$ as the per-iteration $\epsilon$ value for report-noisy-max; $\Delta$ as the sensitivity of Equation 1.
\STATE $\epsilon_\text{iter} \gets \frac{\epsilon_2}{\sqrt{8T\log{\frac{1}{\delta_2}}}}$
\STATE $\Delta \gets \frac{2 \lambda}{n} + \frac{2 \lambda ^2}{n} + \frac{2 \lambda (1 + \lambda)}{n}$
\STATE $scale \gets \frac{\Delta}{\epsilon_{\text{iter}}}$
\STATE $\widehat{\mathbf{w}}^{(0)} \gets$ Random Vector in $\{\mathbf{w} \in \mathbb{R}^d : \lVert \mathbf{w} \rVert_1 \leq \lambda\}$
\FOR{$t = 1$ to $T$}
    \STATE $\widehat{\mathbf{w}}^{(t)} \gets \texttt{DP-FW Step} \left( \epsilon_1, \delta_1, \lambda, T, \mathbf{X}, \mathbf{y}, \widehat{\mathbf{w}}^{(t - 1)} \right)$
     \FOR{$i = 1$ to $n$}
        \STATE $s_i \gets$ $\lvert \mathbf{x}_{(i)}^{\top} \nabla f(\mathbf{X\widehat{\mathbf{w}}^{(t)}}) \rvert + (\mathbf{X\widehat{\mathbf{w}}^{(t)}})^{\top}\nabla f(\mathbf{X\widehat{\mathbf{w}}^{(t)}}) + L(\lVert \mathbf{x}_{(i)} \rVert_2 + \lVert \mathbf{X\widehat{\mathbf{w}}^{(t)}} \rVert_2) \sqrt{\mathcal{G}_{\mathcal{C}}(\mathbf{\widehat{\mathbf{w}}^{(t)}}) / \mu} $
    \ENDFOR
    \STATE $\mathbf{s} \gets \mathbf{s} + \text{Laplace}\left(scale \right)$
    \STATE $j \gets \text{Index of the smallest element of } \mathbf{s}$
    \IF{$s_j < 0$}
        \STATE $\widehat{\mathbf{w}}_j^{(t)} \gets 0$
    \ENDIF
\ENDFOR
\STATE Output $\widehat{\mathbf{w}}^{(T)}$
\end{algorithmic}
\end{algorithm}

Using the report-noisy-min mechanism has another benefit: we are able to reduce the noise scale to sub-$\mathcal{O}(\sqrt{d})$ since the report-noisy-min mechanism only requires the sensitivity for a single coefficient of the output weight \cite{dwork2014algorithmic}. 

We conclude this section with a proof of privacy:
\begin{lemma}
    Algorithm 2 is $(\epsilon_1 + \epsilon_2, \delta_1 + \delta_2)$-differentially private.
\end{lemma}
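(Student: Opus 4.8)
The plan is to establish privacy purely by composition, since \texttt{RNM-Screen} touches the data only in two places: the \texttt{DP-FW Step} calls (line 7) and the report-noisy-min screening (lines 8--14). I would bound the cumulative privacy cost of each of these two streams separately and then add them with sequential composition. The two streams both query the same $\mathbf{X},\mathbf{y}$, so they compose sequentially (not in parallel), and the budgets add; this is precisely what yields $(\epsilon_1+\epsilon_2,\delta_1+\delta_2)$.

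For the optimization stream, note that each \texttt{DP-FW Step} is handed the \emph{total} budget $(\epsilon_1,\delta_1)$ and the total count $T$ (line 7), so the subroutine apportions its noise across all $T$ rounds exactly as in Talwar et al. I would therefore cite their guarantee directly: the concatenation of all $T$ updates is a single $(\epsilon_1,\delta_1)$-differentially private mechanism, and no new argument is needed.

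For the screening stream, I would first argue that one iteration is an $(\epsilon_\text{iter},0)$-DP instance of report-noisy-min. Line 3 sets $\Delta$ to the per-coordinate sensitivity of Equation 1 established in the theorem above, and line 11 adds independent $\mathrm{Laplace}(\Delta/\epsilon_\text{iter})$ noise to every coordinate, which is the standard report-noisy-min calibration. The only wrinkle is the threshold test $s_j<0$ in line 12: rather than treat it as extra released information, I would fold it into the selection by adjoining a data-independent ``do-not-screen'' candidate of value $0$ (which has sensitivity $0$ and hence needs no noise); the argmin over this augmented set screens coordinate $j$ exactly when its noisy score falls below $0$, reproducing lines 12--14 as a single report-noisy-min output, with the weight assignment being post-processing. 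Crucially, each score $s_i$ depends on the data through $\widehat{\mathbf{w}}^{(t)}$ as well as directly, but under adaptive composition $\widehat{\mathbf{w}}^{(t)}$ is fixed auxiliary input and the theorem's sensitivity bound already holds for every fixed $\mathbf{w}\in\mathcal{C}$, so it transfers without change. Composing the $T$ rounds with the advanced composition theorem \cite{dwork2014algorithmic} and substituting $\epsilon_\text{iter}=\epsilon_2/\sqrt{8T\log(1/\delta_2)}=\epsilon_2/\bigl(2\sqrt{2T\log(1/\delta_2)}\bigr)$ drives the linear term to $\epsilon_2/2$ and absorbs the quadratic correction, giving a screening stream that is $(\epsilon_2,\delta_2)$-DP.

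Finally I would invoke basic adaptive composition across the two streams. Although the algorithm interleaves optimization and screening, adaptive composition is insensitive to scheduling, so the total loss is at most the sum of the two streams' losses, namely $(\epsilon_1+\epsilon_2,\delta_1+\delta_2)$. I expect the main obstacle to be purely in the careful bookkeeping of this adaptivity---formally justifying that each screening step may be analyzed with the data-dependent iterate $\widehat{\mathbf{w}}^{(t)}$ held fixed---and in checking that the particular constant in $\epsilon_\text{iter}$ really pushes the advanced-composition bound down to $\epsilon_2$ rather than a mere constant multiple of it.
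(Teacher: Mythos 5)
Your overall architecture is exactly the paper's: treat each screening step as report-noisy-min with per-coordinate sensitivity $\Delta$, compose the $T$ screening steps via advanced composition for pure DP (your arithmetic on the constant is right: $\epsilon_\text{iter}=\epsilon_2/\sqrt{8T\log(1/\delta_2)}$ makes the linear term $\epsilon_2/2$, and the quadratic term is at most $\epsilon_2/2$ under the mild condition $\epsilon_2 \le 4\log(1/\delta_2)$, which the paper never checks), invoke Talwar et al.'s guarantee for the $T$ interleaved \texttt{DP-FW} steps as a single $(\epsilon_1,\delta_1)$-DP mechanism, and finish with basic composition. Where you go beyond the paper is in trying to account for the threshold test $s_j<0$ in line 13, which the paper's proof silently ignores by calling lines 11--12 plain report-noisy-min; your observation that this test releases information beyond the argmin is correct and important.

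Unfortunately, the device you use to handle it fails. You adjoin a ``do-not-screen'' candidate of value $0$, assert it ``has sensitivity $0$ and hence needs no noise,'' and then cite the standard RNM guarantee for the augmented selection. But the standard RNM proof establishes the $e^{\epsilon}$ bound outcome-by-outcome only for candidates carrying Laplace noise; the probability that the \emph{noiseless} candidate wins is not controlled by that argument, and in fact is not controlled at all. Concretely, with noise scale $b=\Delta/\epsilon_\text{iter}$, the no-screen outcome is the event $\min_i (s_i+Z_i)\ge 0$, whose probability factors as $\prod_{i=1}^{d}\Pr[Z_i\ge -s_i]$. The sensitivity analysis permits a single added or removed datapoint to shift every $s_i$ by $\Delta$ in the same direction (and this is structurally real here, since the second term and part of the third term of Equation 1 are common to all coordinates), in which case each factor changes by exactly $e^{\Delta/b}$ in the Laplace tail, so the no-screen probability can change by $e^{d\Delta/b}=e^{d\epsilon_\text{iter}}$ --- a factor of $d$ worse in the exponent than the claimed per-iteration guarantee. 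The repair is to give the threshold its own Laplace noise, turning the step into a one-shot AboveThreshold/SVT-style mechanism over $d+1$ noisy candidates all covered by the RNM proof, or to release the sign of $s_j$ using fresh noise as a separate Laplace mechanism; either way the per-iteration budget accounting (and strictly speaking the algorithm itself) must change. A smaller shared issue: $\mathrm{Lap}(\Delta/\epsilon)$ is the RNM calibration for \emph{monotone} queries, and these scores are not monotone under add/remove adjacency, so a factor of $2$ in the noise scale is arguably also needed. To be fair, the paper's own proof is no better off on either point --- your instinct to confront the threshold was right; it is the execution of the fix that has the gap.
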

\begin{proof}
    First, note that lines 9 and 10 consist are the report-noisy-min mechanism with the noise scaled by the sensitivity of a single component, so for a single iteration they are $(\epsilon_\text{iter}$, 0)-differentially private \cite{dwork2014algorithmic}. Next, using the advanced composition mechanism for pure differential privacy, running the screening rule $T$ times produces an $(\epsilon_2, \delta_2)$-differentially private mechanism. Employing basic composition theorem of approximate differential privacy guarantees that Algorithm 2 is $(\epsilon_1 + \epsilon_2, \delta_1 + \delta_2)$-differentially private \cite{dwork2010boosting}.
\end{proof}

\subsection{Analyzing \texttt{RNM-Screen}'s Screening Potential}

To observe the behavior of screening only one coefficient per iteration, it is interesting to consider the case where coefficients which are updated and screened are uniformly chosen every iteration. Although this will almost certainly not hold during training, analyzing this case will provide us with insight into such a screening procedure. In this case, we have the following: 

\begin{theorem}
    Let $\{I_1, I_2, \dots, I_d\}$ be a set of indicator functions where $I_i$ takes value $1$ when the $i^{\text{th}}$ coefficient is nonzero after $T$ iterations of training. Then 
    \begin{align*}
        \lim_{T \to \infty} \mathbb{E} \left[ I_1 + \dots + I_d \right] &= \frac{(d-1)^2 d}{(d-1)d + 1} \\ 
        \lim_{d \to \infty} \mathbb{E} \left[ I_1 + \dots + I_d \right] &= T - 1.
    \end{align*}
\end{theorem}

The proof is provided in Appendix D. 

From these limits, we can see that when \texttt{RNM-Screen} is run for infinite iterations, it approaches a solution with $\mathcal{O}(d)$ nonzero coefficients. From this result, we can infer that \texttt{RNM-Screen} slows the growth of the number of nonzero coefficients but is weak enough to prevent overscreening. The second limit is expected, as when the number of dimensions grows, it becomes increasingly unlikely that \texttt{RNM-Screen} will choose to screen coefficients which have been updated to be nonzero. This result is common among differentially private algorithms; for example, if the private Frank-Wolfe algorithm is run alone and the number of dimensions approaches infinity, its coefficient choices will approach uniformity. 

\section{Experiments with \texttt{RNM-Screen}}

To test \texttt{RNM-Screen}, we analyzed its performance on two synthetic datasets and a number of real-world datasets. Results are shown below. 

\subsection{Synthetic Data}

We began by using the synthetic dataset which Raj et al. used to test their nonprivate screening algorithm. Specifically, we generated $3000$ datapoints in $\mathbb{R}^{600}$ from the standard normal distribution, and scaled the final dataset so $\lVert \mathbf{x}_i \rVert_\infty \leq 1$. We set the true weight vector $\mathbf{w}^*$ to be sparse with 35 entries of $+1$ and 35 entries of $-1$, and set $\mathbf{y} = \mathbf{Xw}^*$. Raj et al. demonstrated that the nonprivate screening rule listed in Equation 1 performs well on this dataset for linear regression. We verified this result, finding that using the nonprivate Frank-Wolfe optimizer with the nonprivate screening rule at every iteration produced a final weight vector in which nonzero components were only at the locations of nonzero components in the true weight vector and $55\%$ of the true nonzero components were nonzero after training.

We also wanted to determine how correlated features affect the performance of the screening rule. To this end, we generated 3000 datapoints in $\mathbb{R}^{600}$ from $\mathcal{N}(\mathbf{0}, \Sigma)$ where $\Sigma_{ij} = 0.5 ^ {\lvert i - j \rvert}$. We then scaled the data as above, the true weight vector remained the same, and $\mathbf{y}$ was found in the same way. 

We tested the performance of two settings of \texttt{RNM-Screen} on each of these datasets. We ran \texttt{RNM-Screen} with $\epsilon_1 = 4.9,\  \epsilon_2 = 0.1,\  \delta_1 = \frac{1}{4000},\  \delta_2 = \frac{1}{12000},\  \lambda = 50, \text{ and } T = 1000$.\footnote{In our experiments, we chose to employ a high privacy budget for the optimization procedure to identify how private screening performs given a good optimizer but low privacy budget for screening. The experiments in the following sections will demonstrate that in many cases, \texttt{RNM-Screen} introduces minimal negative side-effects during optimization while inducing sparsity.} Figure 1 and Table 1 demonstrate the results of this experiment. It is clear that \texttt{RNM-Screen} performs significantly better than \texttt{ADP-Screen} in that it is able to distinguish between the true nonzero and true zero coefficients in both cases and sets many more true zero coefficients to 0. 

\setcounter{figure}{0}
\setcounter{table}{0}
\begin{table}[t]
\label{tab:1}
\caption{Average true positive rates (TPR), false positive rates (FPR), $F_1$ scores, and sparsities of 20 trials of \texttt{RNM-Screen} when run on synthetic data. True positives correspond to private nonzero coefficients which are truly nonzero. False positives correspond to private nonzero coefficients which are zero in the true solution.}
\begin{center}
\begin{sc}
\begin{tabular}{l|cccc}
& TPR & FPR & $F_1$ Score & Sparsity \\ \hline
Uncorrelated & 0.829 & 0.475 & 0.291 & 0.504 \\ 
Correlated & 0.957 & 0.281 & 0.444 & 0.371 \\ 
\end{tabular}
\end{sc}
\end{center}
\end{table}

\begin{figure}[t]
\label{fig:1}
    \centering
    \includegraphics[width=1.0\columnwidth]{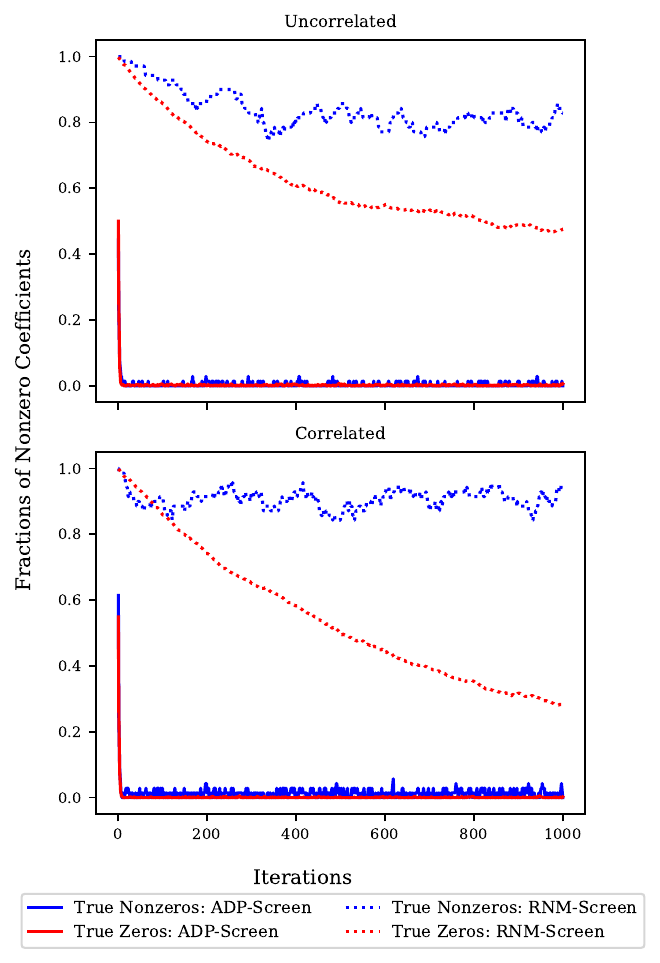}
    \caption{Comparing \texttt{RNM-Screen} to \texttt{ADP-Screen}. From these graphs, it is clear that \texttt{RNM-Screen} is able to distinguish between screening true nonzero coefficients and true zero coefficients, while \texttt{ADP-Screen} is generally unable to do this. However, in both cases, \texttt{RNM-Screen} does not achieve the same final sparsity level as \texttt{ADP-Screen}. Additionally, it is interesting to note that correlated features improves the performance of private screening rules.}
\end{figure}

We define the $F_1$ score as 
$$\frac{\text{\#Correct Nonzero Coeffs.}}{\text{\#Corr. Nonzero Coeffs.} + \frac{\text{\#Incorr. Nonzero Coeffs.} + \text{\#Incorr. Zero Coeffs.}}{2}}$$
and employ this as a quantitative metric to compare how well an algorithm is able to distinguish screening true nonzero and true zero coefficients.\footnote{This metric is inspired by the traditional $F_1$ score for binary classification. We choose to use it as a quantitative metric for measuring sparsity since, unlike traditional $L_0$ metrics for measuring sparsity, it rewards correct nonzero coefficients (coefficients which are nonzero in a nonprivate solution). It also rewards zero coefficients which are zero in a nonprivate solution and penalizes nonzero coefficients which are zero in a nonprivate solution.} We find that the $F_1$ scores of \texttt{RNM-Screen} were significantly better than those of \texttt{ADP-Screen} in both for both the uncorrelated and correlated cases, as measured by a nonparameteric sign test. This is not surprising, as from Figure 1 we can see that the number of true nonzero coefficients found by \texttt{ADP-Screen} is approximately 0, which produces an $F_1$ score of approximately 0. 

\subsection{Real-World Data}

\setcounter{figure}{1}
\begin{figure*}[!ht]
\label{fig:2}
    \centering
    \includegraphics[width=\textwidth]{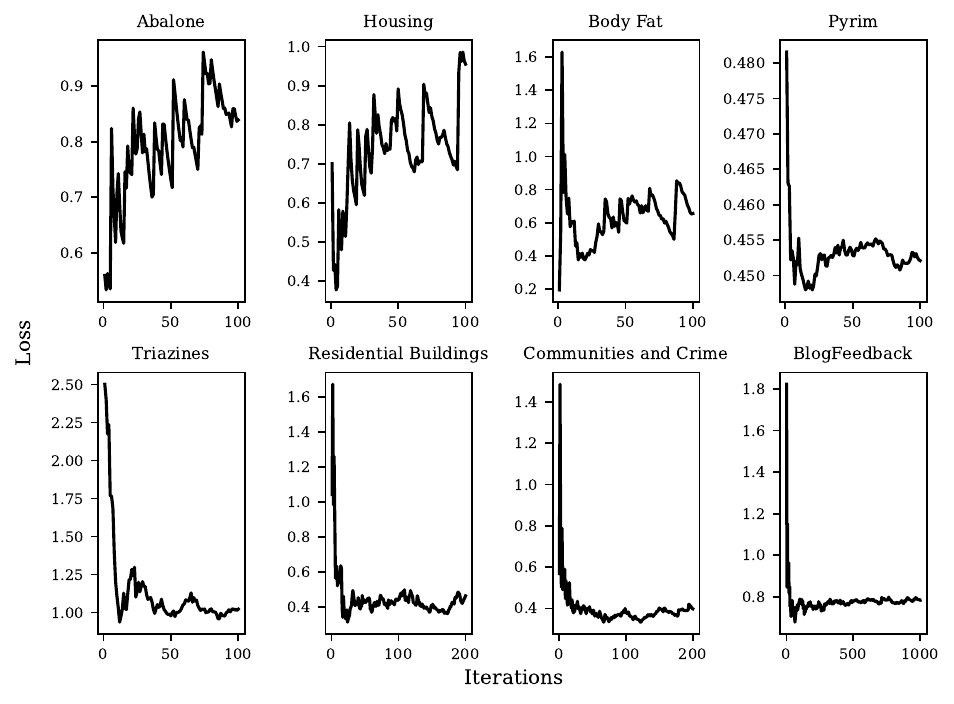}
    \caption{Mean squared error loss of \texttt{RNM-Screen} on real-world datasets. Decreasing loss on datasets with larger $d$ indicate that learning does occur in these instances. Since we are most interested in employing sparsity to build a generalizable and interpretable model on datasets with large $d$, this indicates that \texttt{RNM-Screen} may be useful for this task. Note that \texttt{DP-FW} does not produce sparse solutions, which is the purpose of this work. For this reason, it is excluded from this plot.}
\end{figure*}

We also explored \texttt{RNM-Screen}'s performance on publicly available real-world datasets. We employed the Abalone, Housing, Body Fat, Pyrim, Triazines, Residential Buildings, Normalized Communities and Crime, and BlogFeedback datasets found in the LIBSVM and UCI Dataset Repositories \cite{harrison1978hedonic,behnke1974evaluation,misc_residential_building_data_set_437,misc_communities_and_crime_183,misc_blogfeedback_304}. For the Residential Buildings dataset, our target variable was profit made on the sale of a property.\footnote{These datasets were chosen to have increasing dimensionality, but are still low-dimensional, with $n < d$. We did not employ high-dimensional datasets since running \texttt{DP-FW} on high-dimensional datasets is computationally challenging \cite{raff2024scaling}. Note, however, that sparsity on low-dimensional datasets with sufficiently high $d$ is still desirable since this can produce a more interpretable solution.} We applied the Yeo-Johnson transform to each dataset and scaled them so $\lVert \mathbf{x}_i \rVert_\infty \leq 1$ \cite{yeo2000new}. We also applied a Yeo-Johnson transform to the the target $\mathbf{y}$. The dimensionality of each dataset can be found in Table 2. 

We chose to apply the Yeo-Johnson transform for a few reasons. First, the synthetic data tested above followed a Gaussian distribution, and to try and replicate its performance on real-world datasets, we transformed our data to have an approximately Gaussian distribution. Second, some of the datapoints and features in our real-world dataset contain outliers, and performing a linear scaling transformation to bound the $L_\infty$ norm of these points without the Yeo-Johnson transform would cause many of the values in the design matrices to be nearly zero. Finally, some of the datasets had outliers in their target variables which dominated mean-squared error loss. The Yeo-Johnson transform can combat this outcome. Note that we demonstrate results without the Yeo-Johnson transform in Appendix E.

\subsubsection{$F_1$ Scores}

To measure how well \texttt{RNM-Screen} performed on these datasets, we compared its solutions to the weight vectors produced by scikit-learn's nonprivate LASSO optimizer \cite{scikit-learn}. We aimed to find a regularization strength for the nonprivate LASSO algorithm which produced approximately $\frac{d}{3}$ nonzero coefficients on a given dataset. We then used the $L_1$ norm of this output weight as $\lambda$ for \texttt{RNM-Screen}. For \texttt{RNM-Screen}, we set $\epsilon_1 = 4.9,\ \delta_1 = \frac{3}{4} \times \frac{1}{n},\ \epsilon_2 = 0.1, \text{ and } \delta_2 = \frac{1}{4} \times \frac{1}{n}$. For datasets with fewer datapoints, we ran the algorithm for fewer iterations as more iterations would increase the noise added in both the optimization and screening processes and unnecessarily corrupt the results. 

\setcounter{table}{1}

\begin{table}[t]
\renewcommand{\tabcolsep}{2pt}
\label{tab:2}
\caption[LoF entry]{Average true positive rates (TPR), false positive rates (FPR), $F_1$ scores, sparsities, and $R^2$ scores of 20 trials of \texttt{RNM-Screen} when run on real-world dataset. We could not compare $R^2$ scores of \texttt{RNM-Screen} to \texttt{DP-FW} because the solutions of \texttt{DP-FW} are dense. True positives correspond to private nonzero coefficients which are nonzero in the nonprivate solution. False positives correspond to private nonzero coefficients which are zero in the nonprivate solution.}
\begin{center}
\begin{small}
\begin{sc}
\begin{tabular}{l|ccc|ccccc}
& $n$ & $d$ & Iter. & TPR & FPR & $F_1$ & Sparsity & $R^2$ \\ \hline
Abalone & 4177 & 8 & 100 & 0.333 & 0.200 & 0.534 & \textbf{0.319} & 0.800 \\ 
Housing & 506 & 13 & 100 & 0.250 & 0.444 & 0.497 & \textbf{0.315} & 0.483 \\
Body Fat & 252 & 14 & 100 & 0.600 & 0.444 & 0.333 & \textbf{0.357} & 0.653\\ 
Pyrim & 74 & 27 & 100 & 0.556 & 0.556 & 0.445 & \textbf{0.506} & 0.688\\ 
Triazines & 186 & 60 & 100 & 0.474 & 0.488 & 0.377 & \textbf{0.508} & 0.851\\ 
\makecell[l]{Residential\\Buildings} & 372 & 103 & 200 & 0.484 & 0.486 & 0.387 & \textbf{0.496} & 0.983\\
\makecell[l]{Communities\\and Crime} & 1994 & 122 & 200 & 0.488 & 0.432 & 0.430 & \textbf{0.480} & 0.837\\
BlogFeedback & 52397 & 280 & 1000 & 0.580 & 0.453 & 0.370 & \textbf{0.450} & 0.357
\end{tabular}
\end{sc}
\end{small}
\end{center}
\end{table}

The results of this experiment are shown in Table 2. Of note, when computing the $F_1$ score, we defined the true zero and true nonzero coefficients with respect to the output of the nonprivate LASSO optimizer, not with respect to the optimal weight, as the optimal weight vector is unknown. Bolded values in Table 2 are significantly better than those produced by \texttt{DP-FW}, as measured by a nonparametric sign test. Note that we verified that running the Frank-Wolfe algorithm for more iterations on the smaller datasets produced little change in $F_1$ score while increasing the density of the solutions. This is why we chose to run \texttt{RNM-Screen} for fewer iterations on these datasets. For the larger BlogFeedback dataset, additional iterations did increase the density of the result but were accompanied with an increased $F_1$ score. This is why we ran this for more iterations.

Analyzing Table 2, it is clear that the results for $F_1$ scores are not ideal. For all datasets, \texttt{RNM-Screen} was unable to produce an $F_1$ score significantly better than the standard private Frank-Wolfe algorithm. Overall, \texttt{RNM-Screen} is significantly better at distinguishing between true nonzero and zero coefficients than \texttt{ADP-Screen}, but the results shown in Table 2 indicate that for real-world datasets, \texttt{RNM-Screen} does not effectively find the same nonzero coefficients as a LASSO solver. 

\subsubsection{Mean Squared Error}

To determine whether \texttt{RNM-Screen} allowed any learning to occur, we tracked the mean squared error at every iteration on the above datasets. Results are shown in Figure 2. 

Despite the disappointing results on $F_1$ scores, the mean squared error plots in Figure 2 indicate that learning still occurs on larger datasets despite the inaccurate choice of nonzero coefficients. We believe this is a valuable result, as it shows that the solutions produced by \texttt{RNM-Screen} may be useful in prediction. Indeed, research on the duality of differential privacy and algorithmic stability shows that differentially private algorithms do not seek to produce similar results to nonprivate algorithms but rather find algorithmically stable solutions which fit the data sufficiently well \cite{dwork2009differential}. 

We tested whether \texttt{RNM-Screen}'s nonzero coefficients served as an effective basis for the coefficients which it incorrectly set to zero to determine if the models produced by \texttt{RNM-Screen} have similar expressivity to nonprivate models. We did this by fitting an unregularized multivariate linear regression from the features of the dataset corresponding to \texttt{RNM-Screen}'s nonzero coefficients to the features of the dataset whose \texttt{RNM-Screen} set to zero but were nonzero in the nonprivate solution. We reported the $R^2$ scores of these regressions in Table 2. Higher values of $R^2$ scores indicate that \texttt{RNM-Screen}'s nonzero coefficients can better approximate the target features. 

Table 2 indicates that for most datasets, the features which \texttt{RNM-Screen} chooses are able to approximate the unchosen features which the nonprivate method chooses reasonably well. This confirms our intuition that although \texttt{RNM-Screen} does not choose the same nonzero coefficients as a nonprivate rule, it is still able to learn effectively from the coefficients it chooses.

In order to test whether the mean squared errors are good for their level of sparsity, we compared \texttt{RNM-Screen} to an Oracle-$K$ clip of the private Frank-Wolfe Algorithm. Specifically, for the Oracle-$K$ clip, we found a weight with the private Frank-Wolfe algorithm and then kept only the $K$ absolute largest coefficients nonzero, where $K$ is the number of nonzero coefficients which the nonprivate LASSO solver found. Note that this is not differentially private, as the value of $K$ is not private, but we employed it as a strong baseline. If \texttt{RNM-Screen} produces $F_1$ scores or mean squared errors similar to the Oracle-$K$ clip, we believe it performs well for a private sparse algorithm. 

Results are shown in Table 3, which includes also includes two more nonprivate columns. The NP-FW column displays results from the nonprivate Frank-Wolfe algorithm without any feature selection, and the Preselect-$K$ FW column displays results from the nonprivate Frank-Wolfe algorithm after choosing the $K$ features with highest $L_1$ norm.\footnote{Note that we also tested using the nonprivate Frank-Wolfe algorithm with the screening rule, but this produced identical results to the NP-FW column. This is because nonprivate Frank-Wolfe optimization was started from the zero vector, and the algorithm did not update any features which the screening rule removed. Intuitively, this makes sense - the Frank-Wolfe algorithm only updated the coefficients which were beneficial in the final solution, and the screening rule did not make these coefficients zero.} For all datasets with $d > 20$, \texttt{RNM-Screen} does not produce a significantly worse $F_1$ score than the Oracle-$K$ technique. For all but the BlogFeedback dataset, the datasets with $d > 20$ also do not have significantly worse mean squared errors. These results imply that \texttt{RNM-Screen} is an effective way to ensure privacy while producing a sparse solution. While it may not identify the same nonzero coefficients which a nonprivate LASSO solver identifies, it is still able to learn effectively while producing comparatively good $F_1$ scores and sparsity.

\begin{table}[t]
\renewcommand{\tabcolsep}{3pt}
\label{tab:3}
\caption{Comparing the $F_1$ scores and mean squared errors (MSEs) of 20 trials of the Oracle-$K$ privately optimized Frank-Wolfe with 20 trials of \texttt{RNM-Screen}. Results from the nonprivate Frank-Wolfe algorithm are included as \texttt{NP-FW} and demonstrate that without private optimization or pruning, the algorithm achieves higher $F_1$ scores and MSEs, as expected.}
\begin{center}
\begin{small}
\begin{sc}
\adjustbox{max width=\columnwidth}{%
\begin{tabular}{@{}lcccccccc@{}}
\toprule
& \multicolumn{6}{c}{Nonprivate}                                & \multicolumn{2}{c}{Private}    \\ \cmidrule(lr){2-7} \cmidrule(lr){8-9}
& \multicolumn{2}{c}{NP-FW} & \multicolumn{2}{c}{Preselect-$K$ FW} & \multicolumn{2}{c}{Oracle-$K$ FW} & \multicolumn{2}{c}{RNM-Screen} \\ 
\cmidrule(lr){2-3} \cmidrule(lr){4-5} \cmidrule(lr){6-7}  \cmidrule(lr){8-9}
Dataset                               & $F_1$       & MSE         & $F_1$           & MSE             & $F_1$               & MSE & $F_1$               & MSE      \\ \midrule
Abalone                               & 1.000       & 0.512       & 0.667           & 0.689           & \textbf{0.867}  & \textbf{0.560}  & 0.534               & 0.894    \\
Housing                               & 1.000       & 0.322       & 0.000           & 0.786           & \textbf{0.788}  & \textbf{0.516}  & 0.497               & 0.835    \\
Body Fat                              & 0.833       & 0.025       & 0.400           & 0.246           & 0.320           & \textbf{0.368}  & 0.333               & 0.689    \\
Pyrim                                 & 0.889       & 0.439       & 0.571           & 0.759           & 0.339           & 0.451           & \textbf{0.445}      & 0.451    \\
Triazines                             & 0.829       & 0.610       & 0.467           & 0.655           & 0.337           & 0.979           & \textbf{0.377}      & 0.979    \\
\makecell[l]{Residential\\ Buildings} & 0.667       & 0.009       & 0.500           & 0.358           & 0.342           & 0.415           & \textbf{0.387}      & 0.431    \\
\makecell[l]{Communities\\ and Crime} & 0.800       & 0.277       & 0.540           & 0.373           & 0.400           & 0.382           & 0.430               & 0.395    \\
BlogFeedback                          & 0.868       & 0.489       & 0.546           & 0.489           & 0.320           & \textbf{0.736}  & \textbf{0.370}      & 0.822    \\ \bottomrule
\end{tabular}%
}
\end{sc}
\end{small}
\end{center}
\end{table}

\section{Conclusion}

In this paper, we created \texttt{ADP-Screen}, the first differentially private screening rule. After showing that it overscreens coefficients and is unable to solve a simple regression problem with a synthetic dataset, we developed \texttt{RNM-Screen}. We showed that it performs better than \texttt{ADP-Screen} on synthetic data since it is actually able to distinguish between screening true zero and true nonzero coefficients. After testing on the synthetic datasets, we tested \texttt{RNM-Screen}'s performance on real-world datasets. We found that it produces good sparsity with decreasing mean-squared error on larger datasets. Additionally, $R^2$ scores indicate that the coefficients it chooses are able to model the coefficients it screened away during computation. Finally, we found modest evidence that the Yeo-Johnson transform improves \texttt{RNM-Screen}'s performance on real-world datasets. We believe that the algorithmic and empirical developments this work makes to the field of sparse regression with differential privacy makes it a valuable result to the exploration of future techniques for screening rules and sparse differentially private regression. 

\bibliographystyle{ACM-Reference-Format}
\bibliography{sample-base}

\appendix

\section{Proof: Privatizing Iterative Screening}

\setcounter{theorem}{0}

\begin{theorem}
    Under the conditions listed above, the sensitivity of Equation 1 when $f(\mathbf{Xw}) = \frac{1}{n} \left( \mathbf{y} - \mathbf{Xw} \right)^\top \left( \mathbf{y} - \mathbf{Xw} \right)$ is $$\frac{2\lambda}{n} + \frac{2\lambda^2}{n} + \frac{1}{n} \left( 1 + \lambda \right)\sqrt{\frac{4\lambda^2/n}{1/n}}.$$ The sensitivity of releasing $\mathbf{s} = \begin{bmatrix} s_1 & \dots & s_d \end{bmatrix}^\top$ is $$\frac{2\lambda\sqrt{d}}{n} + \frac{2\lambda^2\sqrt{d}}{n} + \frac{1}{n} \left( \sqrt{d} + \lambda\sqrt{d} \right)\sqrt{\frac{4\lambda^2/n}{1/n}}.$$
\end{theorem}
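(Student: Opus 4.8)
The plan is to treat $s_i$ as a sum of three pieces and to bound, term by term, how much each piece can change when a single datapoint $(\mathbf{x}_j, y_j)$ is added to or removed from the dataset, then recombine with the triangle inequality. First I would compute the gradient explicitly: writing $\mathbf{u} = \mathbf{Xw} \in \mathbb{R}^n$, the loss $f(\mathbf{u}) = \frac{1}{n}(\mathbf{y}-\mathbf{u})^\top(\mathbf{y}-\mathbf{u})$ has $\nabla f(\mathbf{u}) = \frac{2}{n}(\mathbf{u}-\mathbf{y})$ and constant Hessian $\frac{2}{n}\mathbf{I}$, which pins down the smoothness and strong-convexity constants $L$ and $\mu$ appearing in Equation 1. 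I would then assemble the uniform bounds that the stated conditions supply: by H\"older's inequality $\lvert\mathbf{x}_j\cdot\mathbf{w}\rvert \leq \lVert\mathbf{x}_j\rVert_\infty\lVert\mathbf{w}\rVert_1 \leq \lambda$, and since $\lvert y_j\rvert < \lambda$ each residual obeys $\lvert\mathbf{x}_j\cdot\mathbf{w}-y_j\rvert < 2\lambda$. These estimates also drive a bound on the Wolfe gap $\mathcal{G}_{\mathcal{C}}(\mathbf{w})$ and on the norm factors $\lVert\mathbf{x}_{(i)}\rVert_2$ and $\lVert\mathbf{Xw}\rVert_2$ that enter the third term.

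Next I would bound the three contributions individually. The first two terms, $\lvert\mathbf{x}_{(i)}^\top\nabla f(\mathbf{Xw})\rvert$ and $(\mathbf{Xw})^\top\nabla f(\mathbf{Xw})$, are (up to the $\frac{2}{n}$ prefactor) sums over datapoints, so adding or removing one row changes each by at most the magnitude of a single summand, which the residual and inner-product estimates above control; these produce the first two summands $\frac{2\lambda}{n}$ and $\frac{2\lambda^2}{n}$. The third term is a product $L(\lVert\mathbf{x}_{(i)}\rVert_2+\lVert\mathbf{Xw}\rVert_2)\sqrt{\mathcal{G}_{\mathcal{C}}(\mathbf{w})/\mu}$; here I would bound the change in the norm factor by the contribution of the single added or removed coordinate (at most $1$ from $\lVert\mathbf{x}_{(i)}\rVert_2$ and at most $\lambda$ from $\lVert\mathbf{Xw}\rVert_2$, giving the $1+\lambda$) and multiply by the bound on $L\sqrt{\mathcal{G}_{\mathcal{C}}(\mathbf{w})/\mu}$, which is written as $\sqrt{\frac{4\lambda^2/n}{1/n}}$. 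Summing the three scalar bounds by the triangle inequality then yields the claimed sensitivity of a single $s_i$.

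Finally, for the vector $\mathbf{s} = (s_1,\dots,s_d)^\top$, I would observe that every coordinate satisfies the same scalar bound, so the $L_2$-norm of the change vector is at most $\sqrt{d}$ times the per-coordinate sensitivity; this introduces the $\sqrt{d}$ factors in the second claim. The main obstacle I anticipate is that adding or removing a datapoint changes $n$ itself, altering both the $\frac{1}{n}$ normalization and the dimension of $\nabla f \in \mathbb{R}^n$, so the first two terms are not literally fixed-length sums: some care is needed to compare the $n$- and $(n{+}1)$-normalized quantities and to account for the extra gradient coordinate. Relatedly, because the third term is a product of several data-dependent factors rather than a sum, bounding its sensitivity cleanly — and in particular controlling the Wolfe gap inside the square root — is the most delicate step, and I would handle it by bounding each factor and its perturbation separately before combining.
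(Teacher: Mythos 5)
Your overall skeleton is the same as the paper's: split $s_i$ into its three terms, bound each term's change under a single row addition/removal (for the two inner-product terms the difference collapses to the single summand contributed by the added/removed row, bounded via $\lvert\mathbf{x}_0^\top\mathbf{w}\rvert\le\lambda$ and $\lvert y_0\rvert<\lambda$), recombine with the triangle inequality, and get the vector claim by a $\sqrt{d}$ factor. However, two of your steps, carried out literally, do not produce the stated constants. First, the gradient factor: you differentiate the theorem's $f(\mathbf{u})=\frac{1}{n}(\mathbf{y}-\mathbf{u})^\top(\mathbf{y}-\mathbf{u})$ and get $\nabla f(\mathbf{u})=\frac{2}{n}(\mathbf{u}-\mathbf{y})$, which makes your single-summand bounds $\frac{4\lambda}{n}$ and $\frac{4\lambda^2}{n}$ --- twice the claimed first two summands. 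The paper's proof silently works with $f(\mathbf{u})=\frac{1}{2n}(\mathbf{u}-\mathbf{y})^\top(\mathbf{u}-\mathbf{y})$, so that $\nabla f(\mathbf{u})=\frac{1}{n}(\mathbf{u}-\mathbf{y})$ and $L=\mu=\frac{1}{n}$; that convention is what makes $\frac{2\lambda}{n}$ and $\frac{2\lambda^2}{n}$ appear. This is an inconsistency between the theorem statement and the paper's own proof, but your write-up must either adopt the $\frac{1}{2n}$ normalization or accept doubled constants; as planned it proves a different bound than the one claimed. (Your worry about $n$ changing to $n\pm1$ is legitimate but moot for comparison purposes: the paper simply fixes the $\frac{1}{n}$ normalization across $\mathbf{X}$ and $\mathbf{X}'$ and never addresses renormalization.)

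Second, and more substantively, the third term. You read $\sqrt{\frac{4\lambda^2/n}{1/n}}$ as a \emph{value} bound on $L\sqrt{\mathcal{G}_{\mathcal{C}}(\mathbf{w})/\mu}$ and multiply it by the norm-factor perturbation $1+\lambda$. It is not a value bound: each of the $n$ summands of the gap satisfies $\lvert\mathbf{x}_j^\top(\mathbf{w}-\mathbf{z})\rvert\cdot\frac{1}{n}\lvert\mathbf{x}_j^\top\mathbf{w}-y_j\rvert\le\frac{4\lambda^2}{n}$, so the best value bound is $\mathcal{G}_{\mathcal{C}}(\mathbf{w})\le 4\lambda^2$, giving $L\sqrt{\mathcal{G}_{\mathcal{C}}(\mathbf{w})/\mu}\le\frac{2\lambda}{\sqrt{n}}$ and a third-term change of order $\frac{2\lambda(1+\lambda)}{\sqrt{n}}$ --- larger than the claimed $\frac{1}{n}(1+\lambda)\cdot 2\lambda$ by a factor of $\sqrt{n}$. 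What the paper actually does is compute the one-row \emph{sensitivity} of the Wolfe gap, by the same expand-and-cancel trick as the first two terms, namely $\max_{\mathbf{x}_0,y_0,\mathbf{z}\in\mathcal{C}}\bigl\lvert\mathbf{x}_0^\top(\mathbf{w}-\mathbf{z})\cdot\frac{1}{n}(\mathbf{x}_0^\top\mathbf{w}-y_0)\bigr\rvert\le\frac{4\lambda^2}{n}$, and inserts that perturbation (not the gap's magnitude) into $\sqrt{\cdot/\mu}$, multiplying by $L$ and by the norm-factor perturbation $1+\lambda$. So to reproduce the theorem's third summand you must bound how much the gap \emph{changes}, not how large it is, and then combine exactly as the paper does. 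Your closing instinct that the product is the delicate step is well founded: a rigorous product rule $\lvert AB-A'B'\rvert\le\lvert A-A'\rvert\,\lvert B\rvert+\lvert A'\rvert\,\lvert B-B'\rvert$ necessarily involves value bounds of the factors and yields a $\frac{1}{\sqrt{n}}$-scale term, so the paper's product-of-perturbations combination is itself loose on this point; matching the stated constant requires following the paper's specific combination rather than the standard product-sensitivity argument you outline.
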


\begin{proof}
    Let $\mathbf{u} = \mathbf{Xw}$. For linear regression, $f(\mathbf{u}) = \frac{1}{2n} (\mathbf{u} - \mathbf{y})^{\top}(\mathbf{u} - \mathbf{y})$, implying $\nabla f(\mathbf{u}) = \frac{1}{n}(\mathbf{u} - \mathbf{y})$ and $\nabla^{2} f(\mathbf{u}) = \frac{1}{n} \mathbf{I}_{n}$. Therefore, from the definitions of Lipschitz smoothness and strong convexity, we can see that $f(\mathbf{u})$ is $\frac{1}{n}$-smooth and $\frac{1}{n}$-strongly convex with respect to $\mathbf{u}$. 

    By using the triangle inequality and the fact that the maximum of a sum is at most the sum of each element's maximum, we can bound the sensitivity of Equation 1 for linear regression by summing the sensitivity of each of its terms. These calculations are shown below. Assume without loss of generality that the difference between $\mathbf{X}$ and $\mathbf{X}'$ is the data-target tuple $(\mathbf{x}_{0}, y_{0})$. 
    
    To bound the first term of Equation 1, note that 
    \begin{align*}
        &\max_{\mathbf{X}, \mathbf{X}', y, y'} \  \left\lvert \mathbf{x}_{(i)}^{\top} \nabla f(\mathbf{Xw}) - \mathbf{x'}_{(i)}^{\top} \nabla f(\mathbf{X'w}) \right\rvert \\ 
        = & \max_{\mathbf{x}_0, y_0} \ \left\lvert x_{0i} \left[\frac{1}{n}\left(\mathbf{x}_0^{\top}\mathbf{w} - y_0 \right)  \right] \right\rvert \\ 
        \leq & \frac{2\lambda}{n},
    \end{align*}
    where the first simplification comes from expanding the inner products and the second follows directly from restrictions on the feasible set and norms of the input data. This means that the sensitivity of the first term for one value of $i$ is $\frac{2\lambda}{n}$. By the sensitivity principles of vectors, this means that releasing the values of the first term for all $i$ has $L_2$-sensitivity of $\frac{2\lambda\sqrt{d}}{n}$ \cite{dwork2014algorithmic}. 
    
    For the second term, 
    \begin{align*}
        &\max_{\mathbf{X}, \mathbf{X}', y, y'} \  \lvert (\mathbf{Xw})^{\top}\nabla f(\mathbf{Xw}) - (\mathbf{X'w})^{\top}\nabla f(\mathbf{X'w}) \rvert \\ = & \max_{\mathbf{x}_0, y_0} \ \bigg\vert \mathbf{x}_{0}^{\top}\mathbf{w} \bigg[\frac{1}{n}\big(\mathbf{x}_0^{\top}\mathbf{w} - y_0 \big)  \bigg] \bigg\vert \\
        \leq &\frac{2\lambda^2}{n},
    \end{align*}
    using the same logic as the previous calculation. This means its sensitivity is $\frac{2\lambda^2}{n}$ and its $L_2$-sensitivity is $\frac{2\lambda^2 \sqrt{d}}{n}$.
    
    To find the sensitivity of the third term, we note that the maximum of a product is bounded by the product of maximums. Given this, we find that 
    \begin{align*}
        &\max_{\mathbf{X}, \mathbf{X}'} \  \left\lvert \left\lVert \mathbf{x}_{(i)} \right\rVert_2 - \left\lVert \mathbf{x}'_{(i)} \right\rVert_2 \right\rvert \\
        \leq &\max_{\mathbf{x}_0} \sqrt{\left\lvert x_{0i}^2 \right\rvert}  \\
        \leq &1,
    \end{align*}
    where the first simplification is derived from expanding the first and noting that the difference of square roots must be less than or equal to the square root of the absolute value of the difference in their squared terms. The same logic can be applied for 
    \begin{align*}
        &\max_{\mathbf{X}, \mathbf{X}'} \  \left\lvert \left\lVert \mathbf{Xw} \right\rVert_2 - \left\lVert \mathbf{X}'\mathbf{w} \right\rVert_2 \right\rvert \\ %
        \leq  &\max_{\mathbf{x}_0, y_0} \  \sqrt{\left\lvert \left(\mathbf{x}_0^{\top}\mathbf{w}\right)^2 \right\rvert}  \\
        \leq &\lambda.
    \end{align*}
    For the Wolfe gap function, 
    \begin{align*}
        &\max_{\mathbf{X}, \mathbf{X}', y, y', \mathbf{z} \in \mathcal{C}} \ \vert (\mathbf{Xw} - \mathbf{Xz})^{\top}\nabla f(\mathbf{Xw}) - (\mathbf{X'w} - \mathbf{X'z})^{\top}\nabla f(\mathbf{X'w}) \vert \\ 
        = &\max_{\mathbf{x}_0, y_0, \mathbf{z} \in \mathcal{C}} 
            \left\lvert \mathbf{x}_{0}^{\top}(\mathbf{w} - \mathbf{z}) \left[\frac{1}{n}\left(\mathbf{x}_0^{\top}\mathbf{w} - y_0 \right)  \right] \right\rvert \\ %
        \leq &\frac{4 \lambda^2}{n}. 
    \end{align*}
    Plugging in each of these calculations, the sensitivity of this term for a single $i$ is $\frac{1}{n} \left( 1 + \lambda \right)\sqrt{\frac{4\lambda^2/n}{1/n}}$ and the $L_2$-sensitivity for releasing all $i$ is bounded by $\frac{1}{n} \left( \sqrt{d} + \lambda\sqrt{d} \right)\sqrt{\frac{4\lambda^2/n}{1/n}}$. By the triangle inequality, this means the sensitivity of the screening rule for one $i$ is bounded by 
    \begin{equation}
    \frac{2\lambda}{n} + \frac{2\lambda^2}{n} + \frac{1}{n} \left( 1 + \lambda \right)\sqrt{\frac{4\lambda^2/n}{1/n}}
    \end{equation}
    and the total $L_2$-sensitivity for releasing all $i$ is bounded by 
    \begin{equation}
    \label{eq:2}
    \frac{2\lambda\sqrt{d}}{n} + \frac{2\lambda^2\sqrt{d}}{n} + \frac{1}{n} \left( \sqrt{d} + \lambda\sqrt{d} \right)\sqrt{\frac{4\lambda^2/n}{1/n}}.
    \end{equation}
\end{proof}

\section{\texttt{ADP-Screen}}
\setcounter{algorithm}{0}
\begin{algorithm}[!h]
\label{alg:1}
\caption{\texttt{ADP-Screen}}
\begin{algorithmic}[1]
\REQUIRE Privacy Parameters: $\epsilon_1 > 0$, $\epsilon_2 > 0$, $0 < \delta_1 \leq 1$, $0 < \delta_2 \leq 1$; Constraint: $\lambda > 0$; Iterations: $T$; Design Matrix: $\mathbf{X} \in \mathbb{R}^{n \times d}$ where $\lVert \mathbf{x}_i \rVert_\infty \leq 1$ for all $i \in \{1, \ldots, n \}$; Target: $\mathbf{y}$; $L_2$-Sensitivity: $\Delta_2$; Set of Iterations to Screen: $I$.
\STATE \textbf{Define:} $\epsilon_\text{iter}$ as the per-iteration $\epsilon$ value; $\delta_\text{iter}$ as the per-iteration $\delta$ value.
\STATE $l \gets \lvert I \rvert$
\STATE $\delta_{\text{iter}} \gets \frac{\delta_2}{l + 1}$
\STATE $\epsilon_{\text{iter}} \gets \frac{\epsilon_2}{2\sqrt{2l \log \left(1 / \delta_{\text{iter}}\right)}}$
\STATE $\sigma^2 \gets \frac{2\Delta_2^2 \log \left( 1.25/\delta_{\text{iter}} \right)}{\epsilon_{\text{iter}}^2}$
\STATE $\widehat{\mathbf{w}}^{(0)} \gets$ Random Vector in $\{\mathbf{w} \in \mathbb{R}^d : \lVert \mathbf{w} \rVert_1 \leq \lambda\}$
\FOR{$t = 1$ to $T$}
    \STATE $\widehat{\mathbf{w}}^{(t)} \gets \texttt{DP-FW Step} \left( \epsilon_1, \delta_1, \lambda, T, \mathbf{X}, \mathbf{y}, \widehat{\mathbf{w}}^{(t - 1)} \right)$
    \IF{$t \in I$}
        \FOR{$i = 1$ to $d$}
            \STATE $s_i \gets$ $\lvert \mathbf{x}_{(i)}^{\top} \nabla f(\mathbf{X\widehat{\mathbf{w}}^{(t)}}) \rvert + (\mathbf{X\widehat{\mathbf{w}}^{(t)}})^{\top}\nabla f(\mathbf{X\widehat{\mathbf{w}}^{(t)}}) + L(\lVert \mathbf{x}_{(i)} \rVert_2 + \lVert \mathbf{X\widehat{\mathbf{w}}^{(t)}} \rVert_2) \sqrt{\mathcal{G}_{\mathcal{C}}(\mathbf{\widehat{\mathbf{w}}^{(t)}}) / \mu} $
        \ENDFOR
        \STATE $\mathbf{s} \gets \mathbf{s} + \mathcal{N}(\mathbf{0}, \sigma^2\mathbf{I}_d)$
        \STATE $\widehat{\mathbf{w}}_j^{(t)} \gets 0$ \textbf{if} $s_j < 0$ \textbf{for all} $j$
    \ENDIF
\ENDFOR
\STATE Output $\widehat{\mathbf{w}}^{(T)}$
\end{algorithmic}
\end{algorithm}

Using the bounds for the $L_2$-sensitivity of the screening rule in Equation 1, we discuss our first attempt at implementing it into a differentially private linear regression training procedure. Prior to proceeding, we highlight that the method produced in this section is impractical for actual use; rather, the method demonstrates the critical problem of ``overscreening'' in private screening rules. We tried to combat this behavior by only perfoming screening at a set of user-defined iterations, $I$. For example, in Figure 3 we tried setting $I = \{1, 2, \dots, 1000\}$, $I = \{50, 100, \dots, 1000\}$, and $I = \{1000\}$. However, none of these seemed to work very well, as discussed below. We develop a method to address overscreening in Section 5.

Since our screening rule requires $L_1$-constrained optimization, we employ the private Frank-Wolfe algorithm developed in \cite{talwar2015nearly} to train regression models. To the best of our knowledge, this is the only differentially-private algorithm for $L_1$-constrained optimization. Additionally, the mean-squared error loss has Lipschitz constant 1 with respect to the $L_1$ norm, which satisfies the algorithm's requirement for $L_1$-Lipschitz loss functions. 

Repeatedly computing and privatizing our screening rule requires three common differential privacy techniques: the Gaussian mechanism, the advanced composition theorem, and the basic composition theorem. The Gaussian mechanism states that for a function $f$ with sensitivity $s$, the output $f(\mathbf{x}) + \mathcal{N}(\sigma^2)$, where $\sigma^2 = \frac{2s^2 \log(1.25)/\delta}{\epsilon^2}$, is $(\epsilon, \delta)$-differentially private. The advanced composition theorem states that applying an $(\epsilon, \delta)$-differentially private mechanism $k$ times satisfies $(\epsilon', k\delta + \delta')$-differential privacy, where $\epsilon' = \epsilon\sqrt{2k\log(1/\delta')} + k\epsilon\left(e^{\epsilon} - 1 \right)$. Finally, the basic composition theorem states that when applied sequentially, mechanisms with privacy parameters $(\epsilon_1, \delta_1)$ and $(\epsilon_2, \delta_2)$ have cumulative privacy $(\epsilon_1 + \epsilon_2, \delta_1 + \delta_2)$ \cite{near_abuah_2021}. 

Our algorithm is shown in Algorithm 1, abstracting away the steps required for the private Frank-Wolfe algorithm. Since our mechanism uses the advanced composition theorem with approximate differential privacy, we call this method \texttt{ADP-Screen}. We include the following guarantee for privacy:

\begin{lemma}
    Algorithm 1 is $(\epsilon_1 + \epsilon_2, \delta_1 + \delta_2)$-differentially private.
\end{lemma}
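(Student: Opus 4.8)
The plan is to mirror the privacy proof given for Algorithm~2, but with the Gaussian mechanism replacing report-noisy-max, and to account for the fact that screening now occurs at the $l = \lvert I \rvert$ iterations in $I$ rather than at every step. The overall strategy is a three-layer composition: (i) a per-screening-step guarantee from the Gaussian mechanism, (ii) advanced composition across the $l$ screening steps to bundle them into a single $(\epsilon_2, \delta_2)$-DP mechanism, and (iii) basic composition of this screening mechanism with the $(\epsilon_1, \delta_1)$-DP \texttt{DP-FW} optimizer.

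First I would establish the per-step guarantee. At each iteration $t \in I$, lines 10--12 compute the score vector $\mathbf{s}$, whose release has $L_2$-sensitivity $\Delta_2$ (the quantity bounded in Equation~\ref{eq:2}). Line 13 adds $\mathcal{N}(\mathbf{0}, \sigma^2 \mathbf{I}_d)$ with $\sigma^2 = 2\Delta_2^2 \log(1.25/\delta_{\text{iter}})/\epsilon_{\text{iter}}^2$, which is exactly the Gaussian-mechanism noise scale, so the noisy score release is $(\epsilon_{\text{iter}}, \delta_{\text{iter}})$-DP. The subsequent thresholding in line 14 reads only the already-privatized scores, so it is post-processing and incurs no additional cost.

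Next I would apply the advanced composition theorem to the $l$ screening steps. Choosing the slack parameter $\delta' = \delta_{\text{iter}}$, the accumulated $\delta$ is $l\,\delta_{\text{iter}} + \delta' = (l+1)\,\delta_{\text{iter}} = \delta_2$ by the choice $\delta_{\text{iter}} = \delta_2/(l+1)$. For the $\epsilon$ budget, the dominant term of advanced composition is $\epsilon_{\text{iter}}\sqrt{2l\log(1/\delta')}$, which equals $\epsilon_2/2$ by the choice $\epsilon_{\text{iter}} = \epsilon_2 / (2\sqrt{2l\log(1/\delta_{\text{iter}})})$; I would then bound the lower-order correction term $l\,\epsilon_{\text{iter}}(e^{\epsilon_{\text{iter}}}-1)$ by $\epsilon_2/2$, so that the total screening budget is $(\epsilon_2, \delta_2)$.

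Finally, basic composition combines the $(\epsilon_2, \delta_2)$-DP screening with the $(\epsilon_1, \delta_1)$-DP optimization to yield $(\epsilon_1 + \epsilon_2, \delta_1 + \delta_2)$-DP overall. I expect the main obstacle to be the bookkeeping in step (iii): because screening at iteration $t$ modifies $\widehat{\mathbf{w}}^{(t)}$ before it is fed into the next \texttt{DP-FW} step, the two data-accessing streams are interleaved and adaptive, so I must invoke the adaptive form of composition and justify bundling all optimization steps as one $(\epsilon_1,\delta_1)$ mechanism and all screening steps as one $(\epsilon_2,\delta_2)$ mechanism. A secondary subtlety is verifying that the factor of $2$ in $\epsilon_{\text{iter}}$ genuinely dominates the $e^{\epsilon_{\text{iter}}}-1$ correction, which requires $\epsilon_{\text{iter}}$ to be sufficiently small.
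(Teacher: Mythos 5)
Your proposal is correct and follows essentially the same route as the paper's proof: the Gaussian mechanism gives each screening step $(\epsilon_{\text{iter}}, \delta_{\text{iter}})$-DP, advanced composition over the $l$ screening iterations yields $(\epsilon_2, \delta_2)$-DP, and basic composition with the $(\epsilon_1,\delta_1)$-DP Frank--Wolfe optimizer gives the stated guarantee. In fact your accounting is more explicit than the paper's (the choice $\delta' = \delta_{\text{iter}}$, the role of the factor of $2$ in $\epsilon_{\text{iter}}$ as headroom for the $l\,\epsilon_{\text{iter}}(e^{\epsilon_{\text{iter}}}-1)$ correction, and the post-processing and adaptivity remarks), all of which the paper leaves implicit.
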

\begin{proof}
    In lines 3-5, we calculate the scale of noise we need to add to our screening computation such that each computation satisfies $(\epsilon_{\text{iter}}, \delta_{\text{iter}})$ privacy. This means that each iteration of line 13 uses the Gaussian mechanism and is $(\epsilon_\text{iter}, \delta_\text{iter})$-differentially private \cite{dwork2009differential}. By using the advanced composition theorem for approximate differential privacy, we can identify that $l$ compositions of $\epsilon_\text{iter}$ and $\delta_\text{iter}$ is $(\epsilon_2, \delta_2)$-differentially private \cite{dwork2006our,dwork2010boosting}. The differentially private Frank-Wolfe algorithm guarantees the output of $T$ compositions of \texttt{DP-FW Step}s is $(\epsilon_1, \delta_1$)-differentially private. Following this, the basic composition theorem of approximate differential privacy guarantees that Algorithm 1 is $(\epsilon_1 + \epsilon_2, \delta_1 + \delta_2)$-differentially private.
\end{proof}

\texttt{ADP-Screen} did not produce good results, and as such, we include the experiments section demonstrating its performance in the appendix. Here, we include the main takeaways from our experiments, which will be used to develop a better-performing screening rule in the following sections.
\begin{enumerate}
    \item \texttt{ADP-Screen} was unable to discriminate between true zero and true nonzero coefficients.
    \item \texttt{ADP-Screen} screened too many coefficients per iteration, and since the Frank-Wolfe algorithm only updates one coefficient at a time, after a few iterations, the weight vector is only able to have up to a few nonzero coefficients which have not been screened to zero.
\end{enumerate}

\section{Experiments on \texttt{ADP-Screen}}

To test whether \texttt{ADP-Screen} performs well in practice, we tested how it performed on linear regression. To do so, we used the synthetic dataset which Raj et al. used to test their nonprivate screening algorithm. Specifically, we generated $3000$ datapoints in $\mathbb{R}^{600}$ from the standard normal distribution, and scaled the final dataset so $\lVert \mathbf{x}_i \rVert_\infty \leq 1$. We set the true weight vector $\mathbf{w}^*$ to be sparse with 35 entries of $+1$ and 35 entries of $-1$, and set $\mathbf{y} = \mathbf{Xw}^*$. Raj et al. demonstrated that the nonprivate screening rule listed in Equation 1 performs well on this dataset for linear regression. We verified this result, finding that using the nonprivate Frank-Wolfe optimizer with the nonprivate screening rule at every iteration produced a final weight vector in which nonzero components were only at the locations of nonzero components in the true weight vector and $55\%$ of the true nonzero components were nonzero after training. We then ran the private \texttt{ADP-Screen} on this dataset for 1000 iterations with $\epsilon_1 = \epsilon_2 = 2.5, \delta_1 = \delta_2 = \frac{1}{6000}, \text{ and }\lambda = 5$. 

Figure 3 shows the results of this experiment when we implemented screening after every iteration, every $50^{\text{th}}$ iteration, and after the last iteration. It is clear that \texttt{ADP-Screen} is not able to discriminate between screening true zero and true nonzero coefficients in any of these cases. Additionally, when private screening is implemented too often, it screens too many coefficients, and since the Frank-Wolfe algorithm only updates one coefficient at a time, after a few iterations, the weight vector is only able to have up to a few nonzero coefficients which have not been screened to zero. To identify whether the private Frank-Wolfe algorithm or the private screening methods were causing these poor results, we ran the following two experiments: 
\begin{enumerate}[label=(\Alph*)]
    \item \label{exp:A} We tested how well nonprivate screening performed using the private Frank-Wolfe algorithm with $\epsilon = 2.5$ and $\delta=\frac{1}{6000}$. We found that no matter how often we implemented the screening rule, no coefficients were screened from the solution. 
    \item \label{exp:B} We tested how well private screening performed using the nonprivate Frank-Wolfe algorithm with $\epsilon=2.5$ and $\delta=\frac{1}{6000}$. We found that when screening every $50^{\text{th}}$ iteration, the screening rule would produce a final weight vector with nonzero components in approximately $10\%$ of the true nonzero components and none of the true zero components. The results of this experiment when screening every iteration or only at the last iteration mimicked those found in the respective rows of Figure 1.
\end{enumerate}
These experiments provide key insights into the results shown in Figure 3. Experiment A suggests that all of the screening occurring in Figure 3 arises from noise added to the screening rule. This is because without the screening's noise, no screening occurs. It also implies that the noise added for private optimization made it more difficult to screen coefficients, since when we tested completely nonprivate screening (without noisy optimization), the nonprivate screening rule worked well. Heuristically, this outcome may arise because noisy weights make the Wolfe gap function in Equation 1 very large, meaning that it overpowers the second term, which is the only term that can be negative and is thus essential to effective screening. We verified that the Wolfe gap function evaluates to smaller values when using nonprivate optimization. This result can be seen in Figure 4.

Experiment B indicates that the noise added in the private screening rule makes it much stronger that its nonprivate counterpart. This is observed by noting that without a noisy screening rule, screening at every iteration with the nonprivate Frank-Wolfe optimizer would not screen all the true nonzero components to zero, whereas with the noisy screening rule, almost all components are screened to zero after only a few iterations. This makes sense: in an iteration of \texttt{ADP-Screen}, many coefficients can be screened to zero, but at most one zero coefficient can become nonzero through the Frank-Wolfe update. When the coefficients chosen for screening are noisy, there is a high probability that each coefficient is selected to be screened at some point over many iterations, but only a few incorrectly screened coefficients can become nonzero from a Frank-Wolfe update. 

\begin{figure}[!ht]
    \label{fig:1}
    \centering
    \includegraphics[width=0.75\linewidth]{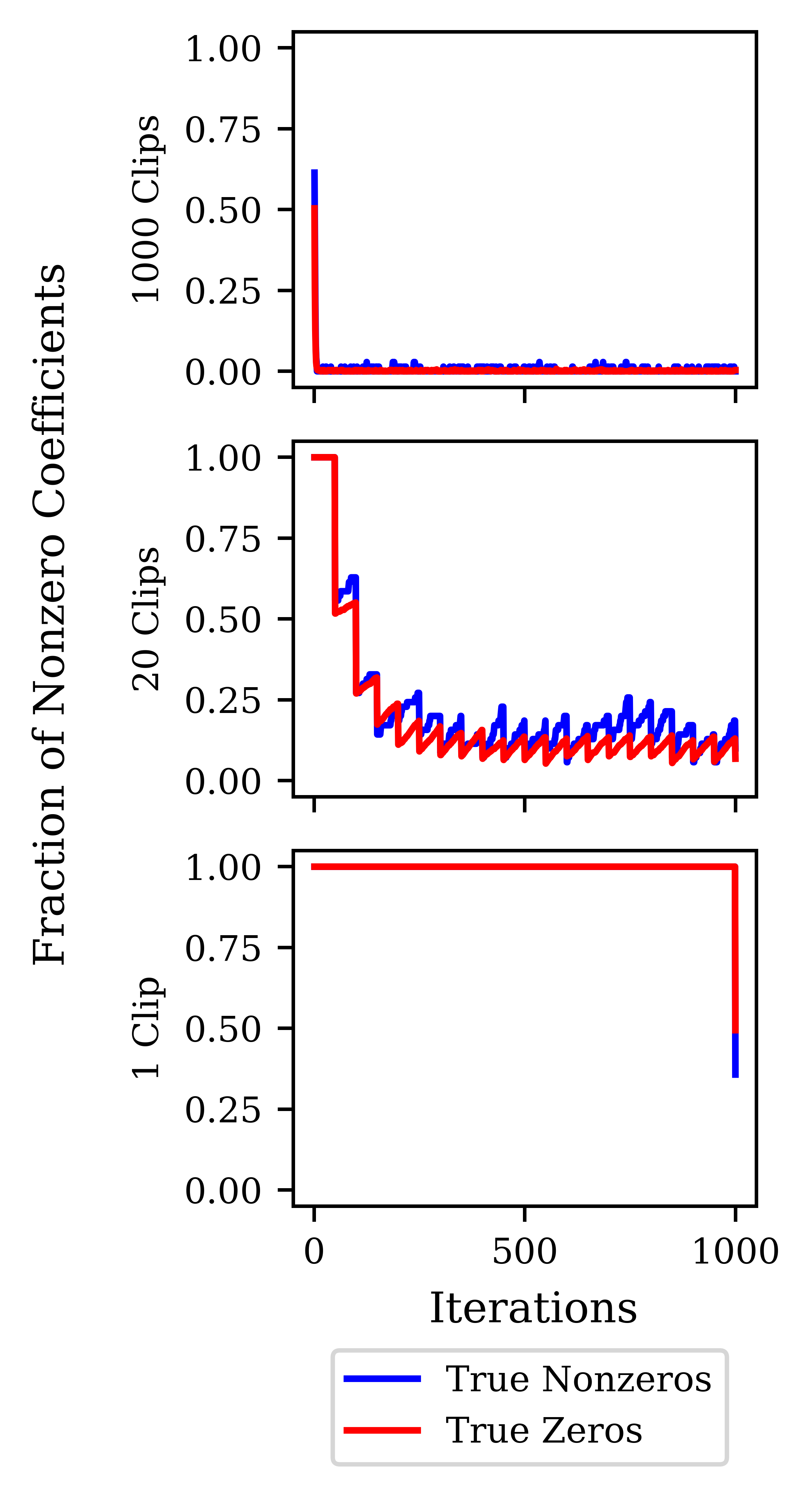}
    \caption{Testing \texttt{ADP-Screen} on a synthetic dataset. Values reported are the fraction of nonzero coefficients in the output of \texttt{ADP-Screen} which correspond to the indices of the true nonzero and true zero coefficients. The true nonzero and true zero coefficients are known from the dataset generation procedure.}
\end{figure}

\begin{figure}[!ht]
    \label{fig:2}
    \centering
    \includegraphics[width=1.0\linewidth]{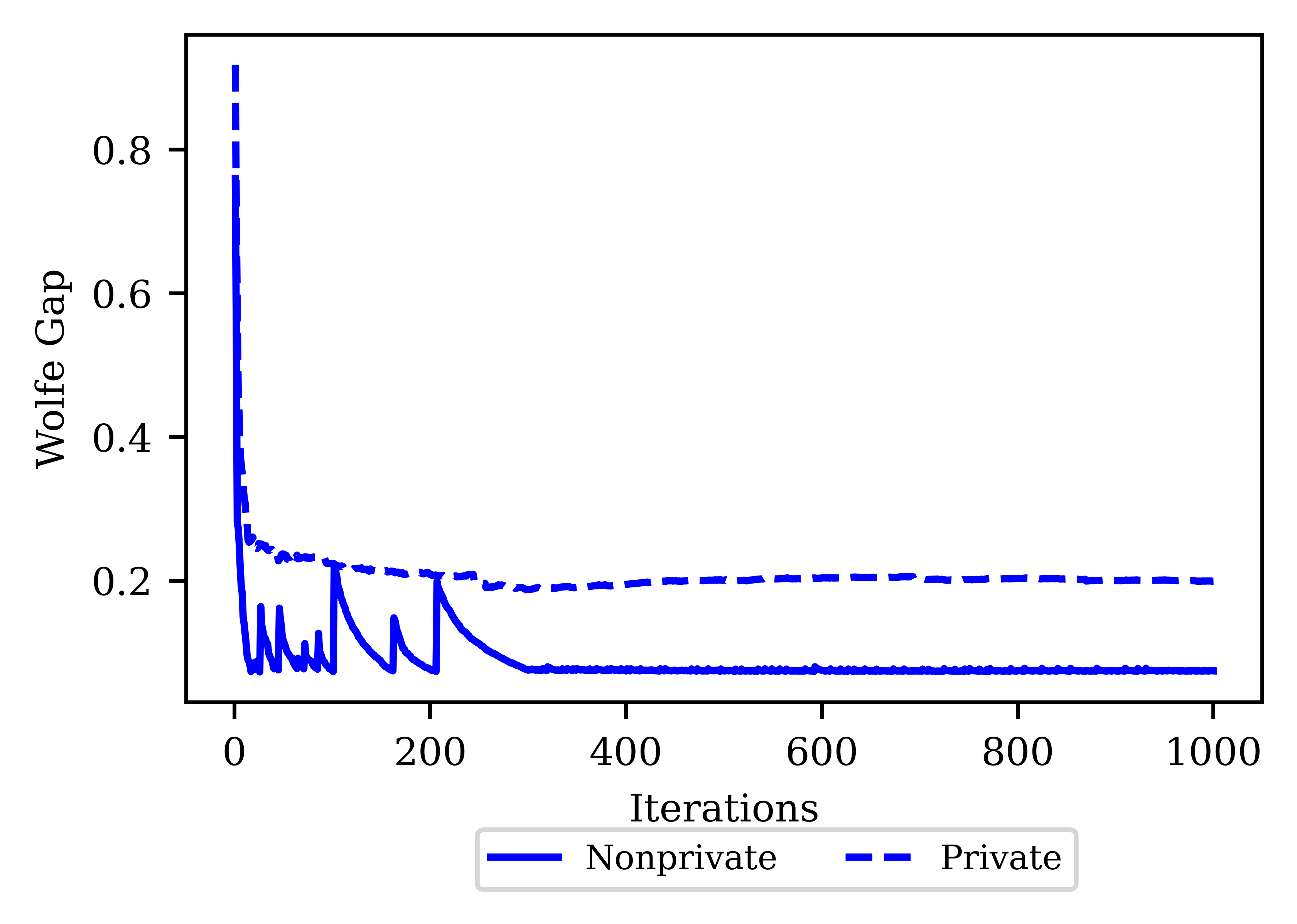}
    \caption{Comparing the Wolfe gap function for nonprivate and private optimization when nonprivate screening is applied at every iteration.}
\end{figure}

\section{Proof: Analyzing \texttt{RNM-Screen}'s Screening Potential}

\setcounter{theorem}{3}

\begin{theorem}
    Let $\{I_1, I_2, \dots, I_d\}$ be a set of indicator functions where $I_i$ takes value $1$ when the $i^{\text{th}}$ coefficient is nonzero after $T$ iterations of training. Then 
    \begin{align*}
        \lim_{T \to \infty} \mathbb{E} \left[ I_1 + \dots + I_d \right] &= \frac{(d-1)^2 d}{(d-1)d + 1} \\ 
        \lim_{d \to \infty} \mathbb{E} \left[ I_1 + \dots + I_d \right] &= T - 1.
    \end{align*}
\end{theorem}

\begin{proof}
We compute the expected value of the number of nonzero coefficients after training, $\mathbb{E} \left[ I_1 + I_2 + \dots + I_d \right]$. By the laws of expectation, this is equivalent to $\mathbb{E} \left[ I_1 + I_2 + \dots + I_d \right] = \mathbb{E} \left[ I_1 \right] + \mathbb{E} \left[ I_2 \right] + \dots + \mathbb{E} \left[ I_d \right]$. Since the coefficients are updated and screened uniformly, they are exchangeable, and thus this expectation simplifies to $d\mathbb{E} \left[ I_1 \right]$.

By the properties of Bernoulli random variables, the expectation of an indicator function is equal to the probability it is $1$. We can see that $\mathbb{P}(I_1 = 1)$ equals
\begin{align*}
    &\mathbb{P}(\text{Coef. 1 Never Screened})\mathbb{P}(\text{Coef. 1 Updated}) + \\
    &\mathbb{P}(\text{Coef. 1 Last Screened At Ite. 1})\mathbb{P}(\text{Coef. 1 Updated After Ite. 1}) + \\
    &\mathbb{P}(\text{Coef. 1 Last Screened At Ite. 2})\mathbb{P}(\text{Coef. 1 Updated After Ite. 2}) + \\ 
    &\dots + \\
    &\mathbb{P}(\text{Coef. 1 Last Screened At Ite. T})\mathbb{P}(\text{Coef. 1 Updated After Ite. T}).
\end{align*}
This can be calculated as 
$$\left(\frac{d-1}{d}\right)^T\left(1 - \left(\frac{d-1}{d}\right)^T \right) + \sum_{i = 1}^{T} \frac{1}{d}\left(\frac{d-1}{d}\right)^{T-i}\left(1 - \left(\frac{1}{d}\right)^{T-i} \right)$$
where the summation breaks each event up into the $\frac{1}{d}$ probability that coefficient 1 was screened on iteration $i$, the $\left( \frac{d-1}{d}\right)^{T-i}$ probability that coefficient 1 was not screened after iteration $i$, and the $\left(1 - \left(\frac{1}{d}\right)^{T-i} \right)$ probability that coefficient 1 was updated after iteration $i$. Therefore $d\mathbb{E} \left[ I_1 \right]$ equals 
$$d\left(\frac{d-1}{d}\right)^T\left(1 - \left(\frac{d-1}{d}\right)^T \right) + \sum_{i = 1}^{T} \left(\frac{d-1}{d}\right)^{T-i}\left(1 - \left(\frac{1}{d}\right)^{T-i} \right).$$
This expression holds only when the assumptions in this section are satisfied. However, as the number of iterations approaches infinity, the scale of differentially private noise increases unbounded, and will thus dominate any finite gradient. This means that as $T \to \infty$, choosing coefficients to update and screen approaches uniformity. Additionally, as the number of dimensions approaches infinity, the probability of choosing a specific coefficient to update or screen approaches $\frac{1}{d}$. Thus, we compute the limits of $d\mathbb{E} \left[ I_1 \right]$ under these limits:
\begin{align*}
    \lim_{T \to \infty} d\mathbb{E} \left[ I_1 \right] &= \frac{(d-1)^2 d}{(d-1)d + 1} , \\
    \lim_{d \to \infty} d\mathbb{E} \left[ I_1 \right] &= T - 1.
\end{align*}
\end{proof}

\section{Ablating the Yeo-Johnson Transform}

We wanted to determine whether the Yeo-Johnson transform had a significant impact on \texttt{RNM-Screen}'s performance. For this reason, we chose to run an ablation study in which we did not apply the Yeo-Johnson transform to the features or targets. Instead, we scaled datapoints so $\lVert \mathbf{x}_i \rVert_\infty \leq 1$ and $\lvert y_i \rvert \leq 1$. We ran this ablation study for each of the real-world datasets in the previous section for the same number of iterations as listed in Table 2. 

Table 4 and Table 5 list the results of this experiment. Note that it is impossible to directly compare the mean-squared error values to previous experiments because the scaling of the target variables is different. 

From Table 4, it is clear that as before, running \texttt{RNM-Screen} on this dataset does not produce significantly better scores than the standard private Frank-Wolfe algorithm, but it does significantly improve the sparsity. However, for all datasets except BlogFeedback, Table 4 has $R^2$ scores that are worse than Table 2. This indicates that the Yeo-Johnson transform allows our chosen features to better approximate unchosen important features when building a model. 

Table 5 demonstrates a significant difference between the results of the transformed and untransformed data. With the transformed data, Table 3 indicates that \texttt{RNM-Screen} outperforms Oracle-$K$ FW on larger datasets with respect to the $F_1$ score. However, in Table 5, this does not happen for any dataset. This may indicate that transforming the data makes it easier for \texttt{RNM-Screen} to distinguish between true zero and true nonzero coefficients, which thus improves its performance with respect to the $F_1$ score. 

\begin{table}[!h]
\label{tab:4}
\caption{Average $F_1$ scores, sparsities, and $R^2$ values of 20 trials of \texttt{RNM-Screen} when run on real-world datasets which have not been transformed with the Yeo-Johnson transform. }
\begin{center}
\begin{small}
\begin{sc}
\begin{tabular}{l|ccc}
 & $F_1$ & Sparsity & $R^2$ \\ \hline
Abalone & 0.450 & \textbf{0.113} & 0.109 \\ 
Housing & 0.525 & \textbf{0.200} & 0.062 \\
Body Fat & 0.447 & \textbf{0.204} & 0.526 \\ 
Pyrim & 0.451 & \textbf{0.441} & 0.582 \\ 
Triazines & 0.412 & \textbf{0.497} & 0.808 \\ 
\makecell[l]{Residential\\Buildings} & 0.372 & \textbf{0.240} & 0.856 \\
\makecell[l]{Communities\\and Crime} & 0.433 & \textbf{0.505} & 0.821 \\
BlogFeedback & 0.367 & \textbf{0.439} & 0.383  
\end{tabular}
\end{sc}
\end{small}
\end{center}
\end{table}

\begin{table}[!h]
\renewcommand{\tabcolsep}{3pt}
\label{tab:3}
\caption{Comparing the $F_1$ scores and mean squared errors (MSEs) of 20 trials of the Oracle-$K$ privately optimized Frank-Wolfe with 20 trials of \texttt{RNM-Screen} on data which has not been processed with the Yeo-Johnson transform.}
\begin{center}
\begin{small}
\begin{sc}
\adjustbox{max width=\columnwidth}{%
\begin{tabular}{@{}lcccccc@{}}
\toprule
& \multicolumn{4}{c}{Nonprivate}                                & \multicolumn{2}{c}{Private}    \\ \cmidrule(lr){2-5} \cmidrule(lr){6-7}
& \multicolumn{2}{c}{NP-FW} & \multicolumn{2}{c}{Oracle-$K$ FW} & \multicolumn{2}{c}{RNM-Screen} \\ 
\cmidrule(lr){2-3} \cmidrule(lr){4-5} \cmidrule(lr){6-7} 
Dataset                               & $F_1$       & MSE         & $F_1$           & MSE             & $F_1$               & MSE      \\ \midrule
Abalone & 1.000 & 0.012 & \textbf{0.667} & \textbf{0.014} & 0.450 & 0.010 \\
Housing &  0.889 & 0.018 & \textbf{0.788} & \textbf{0.051} & 0.525 & 0.159 \\
Body Fat & 1.000 & 0.020 & \textbf{0.550} & \textbf{0.124} & 0.447 & 0.555 \\ 
Pyrim & 0.941 & 0.019 & \textbf{0.583} & \textbf{0.182} & 0.451 & 0.301 \\ 
Triazines & 0.872 & 0.024 & 0.424 & 0.108 & 0.412 & 0.095 \\ 
\makecell[l]{Residential\\Buildings} & 0.061 & 2.120 & 0.411 & 1.711 & 0.372 & 1.820 \\
\makecell[l]{Communities\\and Crime} &  0.469 & 0.020 & \textbf{0.400} & 0.033 & 0.433 & 0.034 \\
BlogFeedback & 0.021 & 0.023 & \textbf{0.349} & 0.022 & 0.367 & \textbf{0.004}\\ \bottomrule
\end{tabular}%
}
\end{sc}
\end{small}
\end{center}
\end{table}

\section{Open Problems}

The experiments in the paper present modest evidence that a screening rule can produce effective sparse solutions for linear regression under differential privacy. In this section, we highlight four open problems for further study. 

First, we ask whether private screening can be adapted to effectively distinguish between the nonzero and zero coefficients of nonprivate solutions on real-world datasets, including high-dimensional ones. Future works to solving this problem may be empirical, perhaps improving \texttt{RNM-Screen}, but a more rewarding approach would identify a utility bound for a private screening rule so it can be compared to other differentially private algorithms. A utility bound is especially useful in comparing differentially private randomized algorithms as it is not subject to random variations which occur when empirically comparing algorithms on datasets. 

Second, we ask whether private screening rules can be employed in a similar approach to nonprivate screening rules. Screened coefficients from nonprivate screening rules will not be updated in future iterations by the optimizer, which improves the computational efficiency of optimization, especially for very high dimensional datasets. Unfortunately, since our screening rule is randomized, it is possible that a coefficient is screened incorrectly, so we continue to optimize for all coefficients despite screening, which does not improve the efficiency of optimization. An effective private screening rule which improves the computational efficiency of optimization would encompass both the sparsity and efficiency benefits of screening rules. 

Third, we note that \texttt{RNM-Screen} requires two sets of privacy parameters, $(\epsilon_1, \delta_1)$ and $(\epsilon_2, \delta_2)$. Although we used a heuristic to set these parameters, consistent guidelines for splitting the privacy budget between these two parameters is unclear. Note that this issue arises in many works with multiple calculations that need to be privatized. For example, the original \texttt{Two-Stage} sparse regression technique proposed by \cite{kifer2012private} also required two sets of privacy parameters. However, in addition to further testing, to produce a private screening rule which can be employed for real-world data analysis, better guidelines for setting these privacy parameters is required. 

Finally, we highlight a limitation to \texttt{RNM-Screen}. On low- \\ dimensional datasets, every coefficient might be important, but \texttt{RNM-Screen} always screens one coefficient per iteration, which can still lead to overscreening. In contrast, nonprivate screening algorithms never screen coefficients which are important to the regression task. For this reason, it is important that \texttt{RNM-Screen} only be used on tasks with a sufficient number of features such that practitioners believe that a sparse solution should exist. 

It is also important to recognize that the screening rules developed in this paper are specific to standard linear regression and do not immediately translate to advanced settings like continual observation. Advanced regression methods are understudied in differential privacy, so in addition to developing feature selection algorithms for these settings, regression optimization techniques must also be developed. 

\end{document}